\newcommand{\pen}[0]{p}
\newlength{\dhatheight}
\newcommand{\conf}{\delta}
\newcommand{\X}{\mathcal X} 
\renewcommand{\H}{\mathcal H} 
\newcommand{\ind}{\mathbbm{1}}
\newcommand{\F}{\mathcal{F}}
\newcommand{\nats}{\mathbb{N}} 
\newcommand{\reals}{\mathbb{R}} 
\newcommand{\E}{\mathbb{E}}
\newcommand{\argmin}{\mathop{\rm argmin}}
\newcommand{\ignore}[1]{}
\newcommand{\todo}[1]{}
\newcommand{\oldstuff}[1]{}
\newtheorem{theorem}{Theorem}
\newtheorem{corollary}[theorem]{Corollary}
\newsavebox{\savepar}
\newenvironment{bigboxit}{\begin{center}\begin{lrbox}{\savepar}
\begin{minipage}[h]{3.15in}
\normalfont
\begin{flushleft}}
{\end{flushleft}\end{minipage}\end{lrbox}\fbox{\usebox{\savepar}}
\end{center}}
\newcommand{\vast}{\bBigg@{3}}
\newcommand{\Vast}{\bBigg@{4}}
\newcommand{\eat}[1]{}
\title{Learning with Changing Features}
\author{Amit Dhurandhar\footnote{IBM Research}, Steve Hanneke\footnote{Princeton} and Liu Yang\footnote{Yale}}
\begin{document}
\maketitle

\begin{abstract}
  In this paper we study the setting where features are added or
  change interpretation over time, which has applications in multiple
  domains such as retail, manufacturing, finance. In particular, we propose an approach to provably determine the
  time instant from which the new/changed features start becoming
  relevant with respect to an output variable in an agnostic (supervised) learning setting. We also
  suggest an efficient version of our approach which has the same
  asymptotic performance. Moreover, our theory also applies when we
  have more than one such change point. Independent post analysis of a change point identified by our method for a large retailer revealed that it corresponded in time with certain unflattering news stories about a brand that resulted in the change in customer behavior. We also applied our method to data from an advanced manufacturing plant identifying the time instant from which downstream features became relevant.

To the best of our knowledge this is the first work that formally studies change point detection in a distribution independent agnostic setting, where the change point is based on the changing relationship between input and output.

\end{abstract}

\section{Introduction}

In domains such as advanced manufacturing which involve thousands to
tens of thousands of processing stages spread over months, measurements
that are taken to monitor the quality of the products are usually very
expensive. This is because not only do the measurement tools, which
have to be extremely high precision, cost millions of dollars; but
each measurement slows down the line resulting in significant loss of
throughput/productivity. Hence, unless these measurements provide an
accurate indication of product health they are simply an incurred
cost. Thus as new tools or old tools after
maintenance are added to the production line, which results in
additional/altered measurements, it is critically important to know
the time instant from when these tools significantly impact quality.
Usually manufacturers/engineers will randomly infrequently measure the products,
but a more intelligent strategy can potentially save them
billions of dollars as premeptive/corrective actions can be taken on
likely to be faulty/subpar products, which can be predicted with much
improved accuracy. Moreover, it is quite possible that there was a
process change at the detected time instant and hence many of the
recent products followed a different route (viz. new tools). Based on
this, the intelligent strategy could be used to recommend that for
products that follow the old route we do not need the measurements
corresponding to the new tools, while those in the new route we should
heavily sample these measurements. Such a dynamic policy, where we
decide how to distribute measurements for better prediction of product
quality, can significantly improve overall profitability. For instance
in chip manufacturing, typically 1\% increase in yield --
i.e. percentage of within spec chips -- is worth over a billion
dollars in revenue. Similar gains can be seen in other types of
manufacturing such as pharmaceutical or processed food industries.

Our solution is also applicable to other domains such as retail, finance,
document classification, sensor networks, where features are
added or change interpretation over time. For instance in finance,
different financial indicators may be added over time to better
determine the health of a deal. Moreover, change in the competitive
landscape can lead to different outcomes even with the same values for these indicators after a certain point in
time. Identifying this time instant can be extremely important in
adapting to the changing environment. A similar issue can be witnessed in document classification, where for instance recent documents with the word "tweet" may be incorrectly classified as those about birds rather than technology if the change point was not promptly identified.

In particular, we make the following contributions: 
In Section \ref{sec:theory}, we provide distribution independent
  excess risk guarantees for the problem of statistical regression
  with no assumptions on the nature of the change. Furthermore, our guarantees are not
significantly worse than settings with \emph{no} feature changes 
making our approach effective in adapting to the change. Our results in the
  main article are for spaces of finite pseudo-dimension, so that our
  analysis is applicable to commonly used empirical risk minimization
  methods (viz. generalized least squares, logistic regression, etc.).
  The supplemental material (section 7) extends the theory to cover regularized
  learning rules (viz. lasso, ridge, regularized logistic regression,
  etc.)  and function classes of unbounded pseudo-dimension
  (viz. kernel regression). This extension, which is conceptually based on similar ideas that we have in the main article, nicely generalizes our theory with the added complexity of reasoning about tail conditions on the noise. In Section \ref{sec:theory}, we also show how our analysis can
  be applied to provably identify not just a single but multiple
  change points.
In Section \ref{sec:eff}, we provide a more efficient version of
  our algorithm based on our analysis in section \ref{sec:theory} and
  related work, with essentially the same asymptotic performance as
  our original algorithm.
In Section \ref{sec:exp}, we perform experiments on synthetic data as well as two real industrial datasets. The first real dataset is from a large retailer. Post analysis by domain experts of a change point identified by our method for a certain brand of interest for this retailer revealed that certain unflattering news stories around that time tarnished the brand image resulting in this change. This was an independent justification for our identified change point and a potential confirmation that it was possibly not just noise. The second real dataset is from an advanced manufacturing plant. In this case we effectively identified the time instant from which measurements from a downstream process started becoming relevant. We report results with three base learning methods namely; SVM with RBF kernel, Logistic regression and L1-regularized logistic regression to showcase the fact that the performance of our detection methods is not limited to any specific learning technique.

\section{Related Work}

There has been recent work \cite{jmlr2014} studying a
dynamically changing environment in which multiple new features are
added at each stage of a multistage process. However, the goal there was to suggest an
efficient and accurate meta-algorithm to update an already existing
regression model with the added features. Their strategy is one
component in making our algorithm in section \ref{sec:eff} more efficient when learning and
detecting the change.  However, the aim of the present work overall is 
different in that we focus on adapting to the time index when the new/changed
features actually become relevant to the learning problem.

Another piece of closely related work is that of change point
detection which has been heavily studied in statistics
\cite{cpd}. Typically, the goal is to find an instant in a time series
from where the values/distribution are significantly different than
the prior period. There are many statistics (viz. CUSUM based,
stability based) that have been developed to detect this change. Some
of these ideas have also been extended to the multivariate setting
\cite{cpd2}. All of this work however is different from ours, as it is primarily unsupervised.

This topic has intimate connections to the general subject of concept
drift in statistical learning \cite{min_concept,cd}, where the
function to be learned varies over time.  It is also related to the
topic of distribution drift~\cite{dd}, in which the marginal
distribution over $x$ changes over time. This captures phenomena such
as the fact that certain terminology has a life cycle, so that terms
that were previously common (such as ``milliner'') are later much less
common, and vice versa (such as ``click'').  However, the present work
differs significantly from the prior literature on both concept drift
and distribution drift.  Indeed, our setting itself differs, in that
these prior works focus on prediction problems, where the objective is
to generalize to new unseen test points; in contrast, we are
interested in a data set where features maybe added/change over time, and the task is to estimate the
regression function at these given points.  Thus, we have access to
\emph{all} of the response variables when estimating the time index at
which the change occurs.  Furthermore, we are mainly interested in more of a sudden change described by varying of the features
to the estimated function, rather than some notion of gradual drift of a
target concept, or general changes to a data distribution.

Our setting is also different from the problem of domain adaptation \cite{da1}. The main distinction between our task
and traditional domain adaptation is that in our case the learner isn't aware of the time instant at which the new/changed features gain significance; indeed, this is our main challenge.

\section{Framework and Theoretical Analysis}
\label{sec:theory}

Suppose that $x_{1},\ldots,x_{m}$ are data points in a space $\X$, and
$Y_{1},\ldots,Y_{m}$ are independent $\reals$-valued random variables,
with range contained in $[-B,B]$ for some $B \in [1,\infty)$
denote $\eta_{i} = \E[Y_{i}]$. The restriction $B \geq 1$ is merely 
for convenience.  The results clearly also have implications for values bounded in a range $[-b,b]$ with $b \in (0,1)$,
obtained simply by multiplying every function $h \in \H_{1} \cup \H_{2}$ and response $Y_{t}$
by $1/b$, calculating the bound below for the case $B=1$,
and multiplying the resulting bound by $b^{2}$.
First, in this abstract version of our setting, we consider general
function classes $\H_{1}$ and $\H_{2}$, both containing functions
mapping $\X \to [-B,B]$.  
Following \cite{pollard:84,pollard:90,anthony:99}, we let
$p_{i}$ denote the pseudo-dimension of
$\H_{i}$ for each $i \in \{1,2\}$: that is, 
$p_{i}$ is the largest integer $p \in \nats$ such that 
$\exists (z_{1},y_{1}),\ldots,(z_{p},y_{p}) \in \X \times \reals$ for which 
the collection of $p$-dimensional binary vectors
$\{ (\ind[h(z_{1}) \leq y_{1}],\ldots,\ind[h(z_{p}) \leq y_{p}]) : h \in \H_{i} \}$ has cardinality $2^{p}$.
For instance, the set of \emph{linear} functions mapping an $r$-dimensional
representation of points $x \in \X$ to $\reals$ has pseudo-dimension $r+1$ (see e.g., \cite{anthony:99}),
and this remains an upper bound on the pseudo-dimension for any fixed monotone transformation
of such linear functions (such as is used in \emph{logistic regression}) \cite{anthony:99}.
Throughout this section, we suppose $m \geq \max\{p_{1},p_{2}\}$ and $\min\{p_{1},p_{2}\} > 0$.
\eat{The supplemental material (section 7) includes an extension of this theory to allow for spaces with 
infinite pseudo-dimension; the extension introduces additional technical complexity
to the arguments and theorems, but the essential ideas are basically the same as those 
used in the simpler case here.}

For any functions $h_{1} \in \H_{1}$ and $h_{2} \in \H_{2}$, 
and any $t_{0} \in \{1,\ldots,m+1\}$, denote
\begin{equation*}
\begin{split}
&R^{*}(h_{1},h_{2},t_{0})\\& = \frac{1}{m} \left( \sum_{t=1}^{t_{0}-1} ( h_{1}(x_{t}) - \eta_{t} )^{2} + \sum_{t=t_{0}}^{m} ( h_{2}(x_{t}) - \eta_{t} )^{2} \right)
\end{split}
\end{equation*}
and 
\begin{equation*}
\begin{split}
&\hat{R}(h_{1},h_{2},t_{0})\\& = \frac{1}{m} \left( \sum_{t=1}^{t_{0}-1} ( h_{1}(x_{t}) - Y_{t} )^{2} + \sum_{t=t_{0}}^{m} ( h_{2}(x_{t}) - Y_{t} )^{2} \right).
\end{split}
\end{equation*}
To be clear, in the cases of $t_{0} \in \{1,m+1\}$, we are defining $\sum_{t=1}^{0} \cdot = \sum_{t=m+1}^{m} \cdot = 0$ in these definitions (and below).
We then denote by 

$(h_{1}^{*},h_{2}^{*},t^{*}) = \argmin_{(h_{1},h_{2},t_{0}) \in \H_{1} \times \H_{2} \times \{1,\ldots,m+1\}} R^{*}(h_{1},h_{2},t_{0})$.
We are interested in obtaining $\hat{h}_{1} \in \H_{1}$, $\hat{h}_{2} \in \H_{2}$, and $\hat{t} \in \{1,\ldots,m+1\}$ based only on $Y_{1},\ldots,Y_{m}$,
such that $R^{*}(\hat{h}_{1},\hat{h}_{2},\hat{t})$ is not too much larger than $R^{*}(h_{1}^{*},h_{2}^{*},t^{*})$.

\begin{bigboxit}
In particular, let us choose
\begin{equation*}
(\hat{h}_{1},\hat{h}_{2},\hat{t}) = \argmin_{(h_{1},h_{2},t_{0}) \in \H_{1} \times \H_{2} \times \{1,\ldots,m+1\}} \hat{R}(h_{1},h_{2},t_{0}).
\end{equation*}
We refer to this general strategy as the {\bf Search-and-Split} algorithm (abbreviated as $\mathbf{SaS}$ below).
\end{bigboxit}

This method is essentially a variant of \emph{empirical risk minimization} for this setting 
in which there is an unknown change time.\eat{  The supplementary material includes an extension 
of our theory to allow for regularized learning rules as well.  This introduces some technical
complications, but the core argument remains the same as in the simpler case presented here.}
Specifically, we have the following theorem for the above Search-and-Split method.

\begin{theorem}
\label{thm:main}
With probability at least $1-\delta$, 
\begin{align*}
 &R^{*}(\hat{h}_{1},\hat{h}_{2},\hat{t})
\leq  R^{*}(h_{1}^{*},h_{2}^{*},t^{*}) \\
 &+ 22 B \sqrt{\frac{2 \ln(2 (m+1)/\delta) + \sum_{j=1}^{2} 3 p_{j} \ln( e m B / p_{j} )}{m}}.
\end{align*}
\end{theorem}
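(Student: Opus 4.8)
The plan is to cast this as a standard excess-risk analysis for empirical risk minimization, using the fixed-design structure (only the $Y_t$ are random) to reduce everything to a single uniform deviation bound. Write $\xi_t = \eta_t - Y_t$, so the $\xi_t$ are independent, mean zero, and bounded in $[-2B,2B]$. Expanding the squares and subtracting, for every triple $(h_1,h_2,t_0)$ one gets
\begin{equation*}
\hat{R}(h_1,h_2,t_0) - R^{*}(h_1,h_2,t_0) = \frac{1}{m}\sum_{t=1}^{m}\xi_t^2 + \Delta(h_1,h_2,t_0),
\end{equation*}
where
\begin{equation*}
\Delta(h_1,h_2,t_0) = \frac{2}{m}\left( \sum_{t=1}^{t_0-1}(h_1(x_t)-\eta_t)\xi_t + \sum_{t=t_0}^{m}(h_2(x_t)-\eta_t)\xi_t \right).
\end{equation*}
The quadratic term $\frac{1}{m}\sum_t \xi_t^2$ is independent of $(h_1,h_2,t_0)$ and therefore cancels in every comparison of empirical risks. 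Since $(\hat{h}_1,\hat{h}_2,\hat{t})$ minimizes $\hat{R}$, the inequality $\hat{R}(\hat{h}_1,\hat{h}_2,\hat{t}) \le \hat{R}(h_1^{*},h_2^{*},t^{*})$ rearranges, after this cancellation, to
\begin{equation*}
R^{*}(\hat{h}_1,\hat{h}_2,\hat{t}) - R^{*}(h_1^{*},h_2^{*},t^{*}) \le \Delta(h_1^{*},h_2^{*},t^{*}) - \Delta(\hat{h}_1,\hat{h}_2,\hat{t}) \le 2\sup_{h_1,h_2,t_0}|\Delta(h_1,h_2,t_0)|.
\end{equation*}
It remains only to bound this supremum with probability $1-\delta$.

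Because the $x_t$ are fixed and the $\xi_t$ are already centered, for each fixed triple $\Delta(h_1,h_2,t_0)$ is a weighted sum of independent mean-zero bounded variables, so no symmetrization or Rademacher step is needed and Hoeffding's inequality applies directly. Each summand $\frac{2}{m}(h(x_t)-\eta_t)\xi_t$ lies in an interval of length at most $16B^2/m$, so Hoeffding gives $\mathbb{P}(|\Delta(h_1,h_2,t_0)| \ge \eps) \le 2\exp(-m\eps^2/(128 B^4))$ for any fixed triple. The leading constant $22 \approx 2\sqrt{128}$ of the theorem is already visible here, and the resulting bound is of order $B^2\sqrt{(\cdot)/m}$, consistent with the $b^2$ rescaling remark preceding the theorem.

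To make the supremum uniform I would discretize each class. The key observation is that a single $L_1$-cover $\mathcal{C}_j$ of $\H_j$ evaluated at $(x_1,\dots,x_m)$ simultaneously controls every prefix and suffix sum, so the unknown change point costs only a union bound over the $m+1$ values of $t_0$ rather than inflating the complexity. Applying Hoeffding to each element of $\mathcal{C}_1 \times \mathcal{C}_2 \times \{1,\dots,m+1\}$ and taking a union bound, the failure probability is at most $2(m+1)|\mathcal{C}_1||\mathcal{C}_2|\exp(-m\eps^2/(128B^4))$; setting this equal to $\delta$ and solving for $\eps$ produces $\ln(2(m+1)/\delta)+\ln|\mathcal{C}_1|+\ln|\mathcal{C}_2|$ inside the square root. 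The standard covering-number bound for a class of pseudo-dimension $p_j$ (Pollard; Anthony--Bartlett) gives $\ln|\mathcal{C}_j| \le 3 p_j \ln(e m B/p_j)$ at a suitably chosen net scale $\eps_j$, which yields exactly the $\sum_{j}3 p_j \ln(emB/p_j)$ term. The deterministic discretization error incurred by replacing $\hat{h}_j$ with its cover element is at most $4B\eps_j$, negligible for the polynomially small $\eps_j$ used.

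The routine ingredients are the Hoeffding tail and the pseudo-dimension covering estimate. The step demanding the most care --- and the main obstacle --- is handling the unknown change point uniformly: confirming that one cover per class serves all $m+1$ splits, so that $t_0$ enters only through the benign $\ln(m+1)$ factor and the complexity stays proportional to $p_1+p_2$ rather than scaling with the number of splits, together with the constant bookkeeping (the factor $22$, the factor $3$, and the precise argument $emB/p_j$ of the logarithm) needed to absorb the discretization error into the main term and recover the stated bound exactly.
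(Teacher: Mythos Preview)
Your proposal is correct and follows essentially the same architecture as the paper's proof: Hoeffding for a fixed triple, a pseudo-dimension covering of each $\H_j$ under the empirical sup-metric, a union bound over $\mathcal{C}_1\times\mathcal{C}_2\times\{1,\dots,m+1\}$ (so the unknown split costs only the $\ln(m+1)$ factor), the ERM inequality, and absorption of the discretization error. The one presentational difference is that the paper introduces an independent ghost sample $\tilde{Y}_1,\dots,\tilde{Y}_m$ and controls $\hat{R}-\E[\tilde{R}]$ directly, whereas you expand the square and isolate the mean-zero linear part $\Delta$; your cancellation of $\frac{1}{m}\sum_t\xi_t^2$ is exactly the implicit cancellation of $\frac{1}{m}\sum_t\mathrm{Var}(Y_t)$ that the paper gets from $R^*=\E[\tilde{R}]-\frac{1}{m}\sum_t\mathrm{Var}(Y_t)$, so the two routes are equivalent, with yours slightly more direct. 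Your observation that the bound naturally scales as $B^2$ (consistent with the $b^2$ rescaling remark) rather than the stated $B$ is also correct.
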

\begin{proof}
Let $\tilde{Y}_{1},\ldots,\tilde{Y}_{m}$ be equal in distribution to $Y_{1},\ldots,Y_{m}$ but independent of $Y_{1},\ldots,Y_{m}$,
and define $\tilde{R}(h_{1},h_{2},t_{0}) = \frac{1}{m} \left( \sum_{t=1}^{t_{0}-1} ( h_{1}(x_{t}) - \tilde{Y}_{t} )^{2} + \sum_{t=t_{0}}^{m} ( h_{2}(x_{t}) - \tilde{Y}_{t} )^{2} \right)$.
For any fixed choices of $h_{1}$, $h_{2}$, and $t \in \{1,\ldots,m+1\}$,
Hoeffding's inequality implies that with probability at least $1-\delta^{\prime}$, 
\begin{equation}
\label{eqn:hoeffding-basic}
\left| \hat{R}(h_{1},h_{2},t) - \E[ \tilde{R}(h_{1},h_{2},t) ] \right|
\leq \sqrt{\frac{2 B^{2} \ln(2/\delta^{\prime})}{m}}.
\end{equation}
Fix $\epsilon = \sqrt{\frac{\max\{p_{1},p_{2}\}}{m}}$.
Now, for each $j \in \{1,2\}$, let $\H_{j,\epsilon}$ denote a minimal $\epsilon$-cover of $\H_{j}$ with respect to the pseudo-metric 
$(h,g) \mapsto \rho_{m}(h,g) = \max_{1 \leq i \leq m} | h(x_{i}) - g(x_{i}) |$.
It is known (see e.g., \cite{anthony:99}) that 
\begin{equation}
\label{eqn:cover-size}
|\H_{j,\epsilon}| \leq (e m B / (\epsilon p_{j}))^{p_{j}}.
\end{equation}

By a union bound, \eqref{eqn:hoeffding-basic} holds simultaneously for every choice of 
$h_{1} \in \H_{1,\epsilon}$, $h_{2} \in \H_{2,\epsilon}$, and $t \in \{1,\ldots,m+1\}$,
with probability at least $1 - \delta^{\prime} |\H_{1,\epsilon}| |\H_{2,\epsilon}| (m+1)$.
Taking $\delta^{\prime} = \delta / ( |\H_{1,\epsilon}| |\H_{2,\epsilon}| (m+1) )$, this holds with probability at least $1-\delta$.
Also define $h_{j,\epsilon}^{*} = \argmin_{h \in \H_{j,\epsilon}} \rho_{m}(h,h_{j}^{*})$
and $\hat{h}_{j,\epsilon} = \argmin_{h \in \H_{j,\epsilon}} \rho_{m}(h,\hat{h}_{j})$, for each $j \in \{1,2\}$.

To simplify notation, denote $\hat{y}_{t} = \hat{h}_{1}(x_{t})$ and $\hat{y}_{t,\epsilon} = \hat{h}_{1,\epsilon}(x_{t})$ for $t \leq \hat{t}-1$,
and denote $\hat{y}_{t} = \hat{h}_{2}(x_{t})$ and $\hat{y}_{t,\epsilon} = \hat{h}_{2,\epsilon}(x_{t})$ for $t \geq \hat{t}$.
Similarly, denote $y_{t}^{*} = h_{1}^{*}(x_{t})$ and $y_{t,\epsilon}^{*} = h_{1,\epsilon}^{*}(x_{t})$ for $t \leq t^{*}-1$,
and denote $y_{t}^{*} = h_{2}^{*}(x_{t})$ and $y_{t,\epsilon}^{*} = h_{2,\epsilon}^{*}(x_{t})$ for $t \geq t^{*}$.
Then note that, by straightforward calculations,
\begin{align*}
&R^{*}(\hat{h}_{1},\hat{h}_{2},\hat{t}) - R^{*}(h_{1}^{*},h_{2}^{*},t^{*})\\ & = \E\left[ \tilde{R}(\hat{h}_{1},\hat{h}_{2},\hat{t}) | \hat{h}_{1},\hat{h}_{2},\hat{t} \right] - \E\left[ \tilde{R}(h_{1}^{*},h_{2}^{*},t^{*}) \right]
\\ & \leq \frac{1}{m} \sum_{t=1}^{m} \E\left[ ( \hat{y}_{t,\epsilon} - \tilde{Y}_{t} )^{2} + \epsilon^{2} + 2 \epsilon | \hat{y}_{t,\epsilon} - \tilde{Y}_{t} | \Big| \hat{h}_{1},\hat{h}_{2},\hat{t} \right]
\\ & - \frac{1}{m} \sum_{t=1}^{m} \E\left[ (y_{t,\epsilon}^{*} - \tilde{Y}_{t})^{2} - \epsilon^{2} - 2 \epsilon | y_{t,\epsilon}^{*} - \tilde{Y}_{t} | \right]
\\ & \leq 2 \epsilon^{2} + 8 \epsilon B 
+ \E\left[ \tilde{R}(\hat{h}_{1,\epsilon}, \hat{h}_{2,\epsilon}, \hat{t}) \Big| \hat{h}_{1}, \hat{h}_{2}, \hat{t} \right] - \E\left[ \tilde{R}(h_{1,\epsilon}^{*},h_{2,\epsilon}^{*},t^{*}) \right].
\end{align*}
Thus, on the above event of probability $1-\delta$, 
\begin{align*}
&R^{*}(\hat{h}_{1},\hat{h}_{2},\hat{t}) - R^{*}(h_{1}^{*},h_{2}^{*},t^{*})\\& \leq
2 \epsilon^{2} + 8 \epsilon B + 2 \sqrt{\frac{2 B^{2} \ln(2/\delta^{\prime})}{m}} 
+ \hat{R}(\hat{h}_{1,\epsilon},\hat{h}_{2,\epsilon},\hat{t}) - \hat{R}(h_{1,\epsilon}^{*},h_{2,\epsilon}^{*},t^{*}).
\end{align*}
Then note that 
\begin{align}
&\hat{R}(\hat{h}_{1,\epsilon},\hat{h}_{2,\epsilon},\hat{t}) - \hat{R}(h_{1,\epsilon}^{*},h_{2,\epsilon}^{*},t^{*}) \notag
 \\& \leq \frac{1}{m} \sum_{t=1}^{m} \left( ( \hat{y}_{t} - Y_{t} )^{2} + \epsilon^{2} + 2 \epsilon | \hat{y}_{t} - Y_{t} | \right) \notag
\\ & ~~- \frac{1}{m} \sum_{t=1}^{m} \left( (y_{t}^{*} - Y_{t})^{2} - \epsilon^{2} - 2 \epsilon | y_{t}^{*} - Y_{t} | \right) \notag
\\ & \leq 2 \epsilon^{2} + 8 \epsilon B + \hat{R}(\hat{h}_{1}, \hat{h}_{2}, \hat{t}) - \hat{R}(h_{1}^{*},h_{2}^{*},t^{*}) \notag
\\ & \leq  2 \epsilon^{2} + 8 \epsilon B. \label{eqn:empirical-diff-nonpositive}
\end{align}
Altogether, and combined with \eqref{eqn:cover-size}, we have that, with probability at least $1-\delta$, 
\begin{align*}
&R^{*}(\hat{h}_{1},\hat{h}_{2},\hat{t}) - R^{*}(h_{1}^{*},h_{2}^{*},t^{*})\\
& \leq 4 \epsilon^{2} + 16 \epsilon B + 2 \sqrt{\frac{2 B^{2} \ln(2/\delta^{\prime})}{m}}
\\ & \leq 20 B \sqrt{\frac{\max\{p_{1},p_{2}\}}{m}} 
\\ & + 2 B \sqrt{\frac{2 \ln(2(m+1)/\delta) + \sum_{j=1}^{2} 3 p_{j} \ln( e m B / p_{j} )}{m}}
\\ & \leq 22 B \sqrt{\frac{2 \ln(2(m+1)/\delta) + \sum_{j=1}^{2} 3 p_{j} \ln( e m B / p_{j} )}{m}}.
\end{align*}
\end{proof}

\paragraph{Application to Addition of New Features:}
Next, consider the special case in which there exist feature functions $\phi_{1},\ldots,\phi_{d+k} : \X \to \reals$, for $d,k \in \nats$,
and there exist function classes $\F_{1},\F_{2}$ such that every $f \in \F_{1}$ maps $\reals^{d} \to [-B,B]$,
while every $f \in \F_{2}$ maps $\reals^{d+k} \to [-B,B]$.
Then we can use the above framework to discuss the scenario in which the learner is tasked with 
identifying a time $t_{0}$ before which the first $d$ features suffice for good performance, and after which 
the full $d+k$ features are needed to obtain good performance.  Specifically, in this case, the class $\H_{1}$
is the set of functions $x \mapsto f(\phi_{1}(x),\ldots,\phi_{d}(x))$ s.t. $f \in \F_{1}$, and the class $\H_{2}$
is the set of functions $x \mapsto f(\phi_{1}(x),\ldots,\phi_{d+k}(x))$ s.t. $f \in \F_{2}$.
Then overloading the above notation, so that
\begin{align*}
\hat{R}(f_{1},f_{2},t_{0}) = &
\sum_{t=1}^{t_{0}-1} ( f_{1}(\phi_{1}(x_{t}),\ldots,\phi_{d}(x_{t})) - Y_{t} )^{2} \\&
+ \sum_{t=t_{0}}^{m} ( f_{2}(\phi_{1}(x_{t}),\ldots,\phi_{d+k}(x_{t})) - Y_{t} )^{2}
\end{align*}
and
\begin{align*}
R^{*}(f_{1},f_{2},t_{0}) = & 
\sum_{t=1}^{t_{0}-1} ( f_{1}(\phi_{1}(x_{t}),\ldots,\phi_{d}(x_{t})) - \eta_{t} )^{2} \\&
 + \sum_{t=t_{0}}^{m} ( f_{2}(\phi_{1}(x_{t}),\ldots,\phi_{d+k}(x_{t})) - \eta_{t} )^{2},
\end{align*}
consider the algorithm that chooses
\begin{equation*}
(\hat{f}_{1},\hat{f}_{2},\hat{t}) = \argmin_{(f_{1},f_{2},t_{0}) \in \F_{1} \times \F_{2} \times \{1,\ldots,m+1\}} \hat{R}(f_{1},f_{2},t_{0}).
\end{equation*}
Then for 
\begin{equation*}
(f_{1}^{*},f_{2}^{*},t^{*}) = \argmin_{(f_{1},f_{2},t_{0}) \in \F_{1} \times \F_{2} \times \{1,\ldots,m+1\}} R^{*}(f_{1},f_{2},t_{0}),
\end{equation*}
Theorem~\ref{thm:main} implies the following corollary.

\begin{corollary}
\label{cor:main}
With probability at least $1-\delta$, 
\begin{align*}
& R^{*}(\hat{f}_{1},\hat{f}_{2},\hat{t})
\leq R^{*}(f_{1}^{*},f_{2}^{*},t^{*})
\\ & + 22 B \sqrt{\frac{2 \ln(2 (m+1) /\delta) + \sum_{j=1}^{2} 3 p_{j} \ln( e m B / p_{j} )}{m}}. 
\end{align*}
\end{corollary}

\paragraph{Remarks on Computational Speedups:}
For the sake of reducing the computational burden of searching over values of $t_{0}$
to identify $\hat{t}$, we can alternatively search over a grid of values $i \lfloor \sqrt{m}/B \rfloor$, $i \in \nats$ with $i \lfloor \sqrt{m}/B \rfloor \leq m+1$
(supposing $B \leq \sqrt{m}$ for simplicity).
Denoting
\begin{equation*}
(\hat{g}_{1},\hat{g}_{2},\hat{t}_{\sqrt{m}}) = 
\!\!\!\!\!\!\!\!\argmin_{\substack{(f_{1},f_{2},t_{0}) \in \\ \F_{1} \times \F_{2} \times \{ i \lfloor \sqrt{m}/B \rfloor : i \lfloor \sqrt{m}/B \rfloor \leq m+1\}}}
\!\!\!\!\!\!\!\!\hat{R}(f_{1},f_{2},t_{0}),
\end{equation*}
the above analysis provides a similar guarantee as Corollary~\ref{cor:main}.
Specifically, the only step requiring modification to accomodate this change is the inequality in \eqref{eqn:empirical-diff-nonpositive},
to account for the fact that $(\hat{g}_{1},\hat{g}_{2},\hat{t}_{\sqrt{m}})$ is not quite the minimizer.
However, for $(\hat{f}_{1},\hat{f}_{2},\hat{t})$ as above, 
denoting by $t_{\sqrt{m}}^{\prime}$ the value in $\{ i \lfloor \sqrt{m}/B \rfloor : i \lfloor \sqrt{m}/B \rfloor \leq m+1 \}$
closest to $\hat{t}$, we have
\begin{equation*}
\hat{R}(\hat{g}_{1},\hat{g}_{2},\hat{t}_{\sqrt{m}})
\leq \hat{R}(\hat{f}_{1},\hat{f}_{2},t_{\sqrt{m}}^{\prime})
\leq \frac{4 B}{\sqrt{m}} + \hat{R}(\hat{f}_{1},\hat{f}_{2},\hat{t}).
\end{equation*}
Plugging this into the above analysis yields that, with probability at least $1-\delta$,
\begin{align*}
&R^{*}(\hat{g}_{1},\hat{g}_{2},\hat{t}_{\sqrt{m}}) 
\leq  R^{*}(f_{1}^{*},f_{2}^{*},t^{*})\\ &
+ 26 B \sqrt{\frac{2 \ln(2 (m+1)/\delta) + \sum_{j=1}^{2} 3 p_{j} \ln( e m B / p_{j} )}{m}}. 
\end{align*}

\paragraph{Remarks on Adapting to Multiple Change Times:}
Rather than allowing only a \emph{single} change time, it is a simple matter to generalize the above
procedure to allow \emph{any number} $K$ of change times.  Specifically, with $K+1$ spaces $\H_{1,K},\ldots,\H_{K+1,K}$,
where $p_{j,K}$ denotes the pseudo-dimension of $\H_{j,K}$,
defining $t_{0} = 0$, $t_{K+1}^{(K)} = m$, 
\begin{equation*}
R^{*}(h_{1},\ldots,h_{K+1},t_{1},\ldots,t_{K}) 
= \frac{1}{m} \sum_{j = 0}^{K} \sum_{t=t_{j}+1}^{t_{j+1}} ( h_{j+1}(x_{t}) - \eta_{t} )^{2},
\end{equation*}
\begin{equation*}
\hat{R}(h_{1},\ldots,h_{K+1},t_{1},\ldots,t_{K}) 
= \frac{1}{m} \sum_{j=0}^{K} \sum_{t=t_{j}+1}^{t_{j+1}} ( h_{j+1}(x_{t}) - Y_{t} )^{2},
\end{equation*}
\begin{align*}
&(h_{1,K}^{*},\ldots,h_{K+1,K}^{*},t_{1,K}^{*},\ldots,t_{K,K}^{*})\\ 
&= \!\!\argmin_{\substack{(h_{1},\ldots,h_{K+1}) \in \times_{j=1}^{K+1} \H_{j,K} \\ 0 \leq t_{1} \leq \cdots \leq t_{K} \leq m}} \!\!R^{*}(h_{1},\ldots,h_{K+1},t_{1},\ldots,t_{K}),
\end{align*}
and
\begin{align*}
&(\hat{h}_{1,K},\ldots,\hat{h}_{K+1,K},\hat{t}_{1,K},\ldots,\hat{t}_{K,K}) \\
&= \!\!\argmin_{\substack{(h_{1},\ldots,h_{K+1}) \in \times_{j=1}^{K+1} \H_{j,K} \\ 0 \leq t_{1} \leq \cdots \leq t_{K} \leq m}}\!\! \hat{R}(h_{1},\ldots,h_{K+1},t_{1},\ldots,t_{K}),
\end{align*}
we have with probability at least $1-\delta$, 
\begin{align*}
 &R^{*}(\hat{h}_{1,K},\ldots,\hat{h}_{K+1,K},\hat{t}_{1,K},\ldots,\hat{t}_{K,K}) \\
 &\leq R^{*}(h_{1,K}^{*},\ldots,h_{K+1,K}^{*},t_{1,K}^{*},\ldots,t_{K,K}^{*}) \\
 &~~~+ 22 B \sqrt{\frac{2 \ln\left(\frac{2 (m+1)^{K}}{\delta}\right) + \sum\limits_{j=1}^{K+1} 3 p_{j,K} \ln( \frac{e m B}{p_{j,K}} )}{m}}.
\end{align*}
The proof follows analogously to that of Theorem~\ref{thm:main}. 

If $K$ is unknown, via the method of \emph{structural risk minimization} \cite{vapnik:82,vapnik:98}, it is still possible to effectively learn. Specifically, for any $K \in \{0,\ldots,m\}$, based on analogous arguments to Theorem~\ref{thm:main}, we have that with probability $1-\delta/(K+2)^{2}$, 
\begin{align*}
& R^{*}(\hat{h}_{1,K},\ldots,\hat{h}_{K+1,K},\hat{t}_{1,K},\ldots,\hat{t}_{K,K}) + \frac{1}{m} \sum_{t=1}^{m} {\rm Var}(Y_{t})\\
 & \leq \hat{R}(\hat{h}_{1,K},\ldots,\hat{h}_{K+1,K},\hat{t}_{1,K},\ldots,\hat{t}_{K,K}) 
 \\& + 11 B \sqrt{\frac{2\ln\left(\frac{2 (m+1)^{K} (K+2)^{2}}{\delta}\right) + \sum\limits_{j=1}^{K+1} 3 p_{j,K} \ln(\frac{e m B}{p_{j,K}})}{m}},
\end{align*}
while
\begin{align*}
&\hat{R}(\hat{h}_{1,K},\ldots,\hat{h}_{K+1,K},\hat{t}_{1,K},\ldots,\hat{t}_{K,K}) \\
 &\leq \hat{R}(h_{1,K}^{*},\ldots,h_{K+1,K}^{*},t_{1,K}^{*},\ldots,t_{K,K}^{*}),
\end{align*}
and 
\begin{align*}
&\hat{R}(h_{1,K}^{*},\ldots,h_{K+1,K}^{*},t_{1,K}^{*},\ldots,t_{K,K}^{*})\\
 & \leq R^{*}(h_{1,K}^{*},\ldots,h_{K+1,K}^{*},t_{1,K}^{*},\ldots,t_{K,K}^{*}) 
 + \frac{1}{m} \sum_{t=1}^{m} {\rm Var}(Y_{t})
\\ & + 11 B \sqrt{\frac{2\ln\left(\frac{2 (m+1)^{K} (K+2)^{2}}{\delta}\right) + \sum\limits_{j=1}^{K+1} 3 p_{j,K} \ln(\frac{e m B}{p_{j,K}})}{m}}.
\end{align*}
Therefore, choosing
\begin{multline*}
\hat{K} = \!\!\argmin_{K \in \{0,\ldots,m\}}\!\! \hat{R}(\hat{h}_{1,K},\ldots,\hat{h}_{K+1,K},\hat{t}_{1,K},\ldots,\hat{t}_{K,K}) 
\\ + 11 B \sqrt{\frac{2\ln\!\left(\frac{2 (m+1)^{K} (K+2)^{2}}{\delta}\right) + \!\sum\limits_{j=1}^{K+1}\! 3 p_{j,K} \ln(\frac{e m B}{p_{j,K}})}{m}},
\end{multline*}
by a union bound, we have that with probability $1-\delta$, 
\begin{align*}
&R^{*}(\hat{h}_{1,\hat{K}},\ldots,\hat{h}_{\hat{K}+1,\hat{K}},\hat{t}_{1,\hat{K}},\ldots,\hat{t}_{\hat{K},\hat{K}}) \\
 & \leq \min_{K \in \{0,\ldots,m\}} R^{*}(h_{1,K}^{*},\ldots,h_{K+1,K}^{*},t_{1,K}^{*},\ldots,t_{K,K}^{*})
\\ & + 22 B \sqrt{\frac{2\ln\left(\frac{2 (m+1)^{K} (K+2)^{2}}{\delta}\right) + \sum\limits_{j=1}^{K+1} 3 p_{j,K} \ln(\frac{e m B}{p_{j,K}})}{m}}.
\end{align*}
We are thus able to achieve roughly the same guarantee as available above for any fixed $K$.

\begin{figure*}[t]
   \begin{tabular}{c}
      \includegraphics[width=0.55\linewidth]{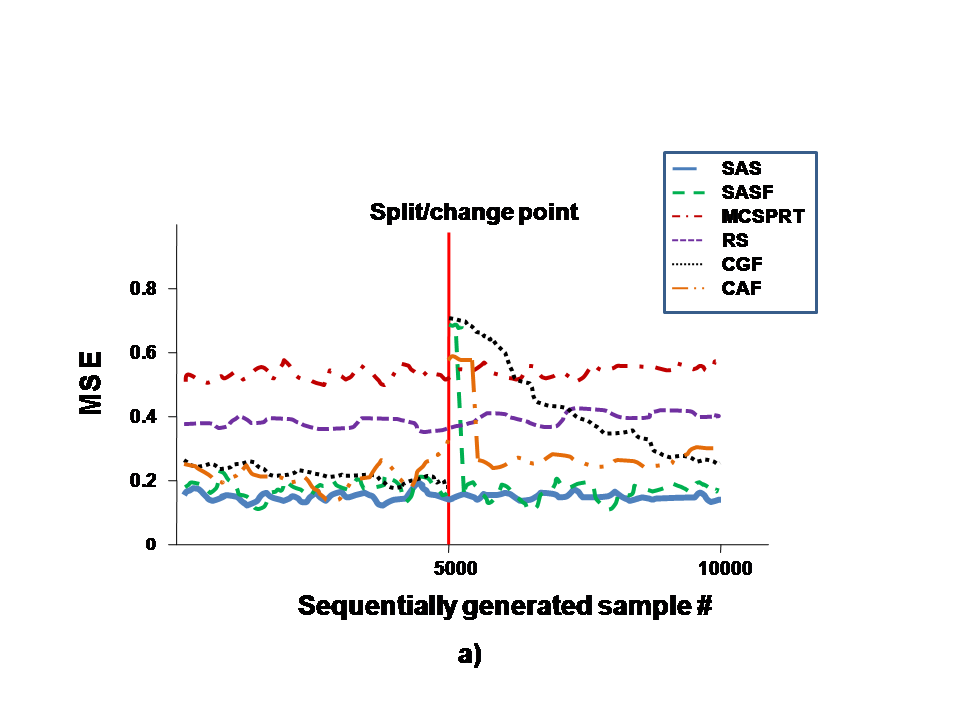}
     \includegraphics[width=0.55\linewidth]{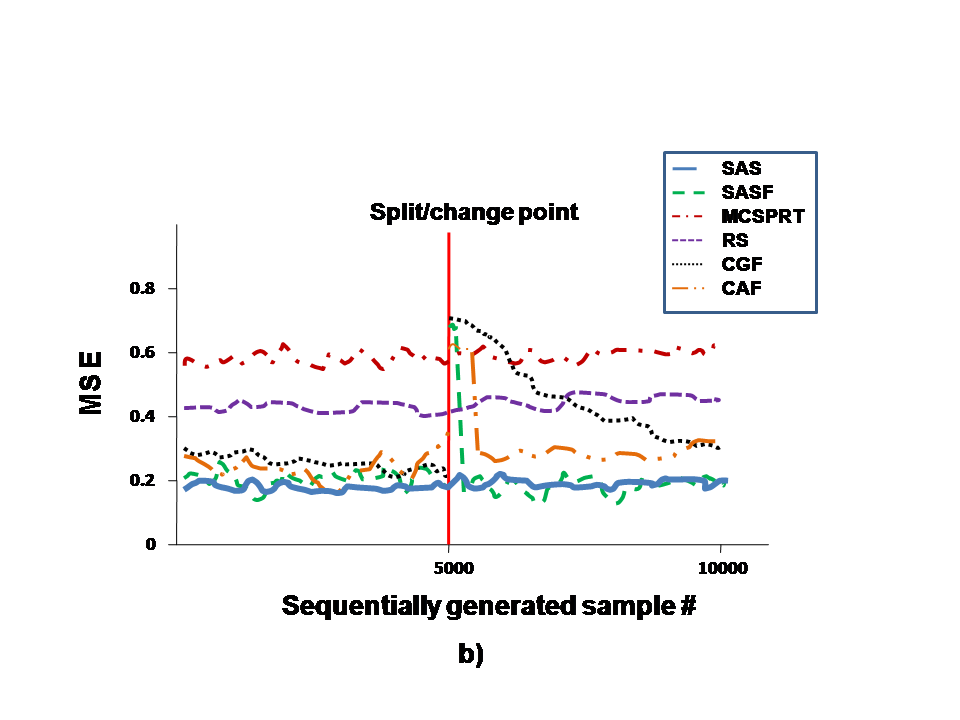}\\
\includegraphics[width=0.55\linewidth]{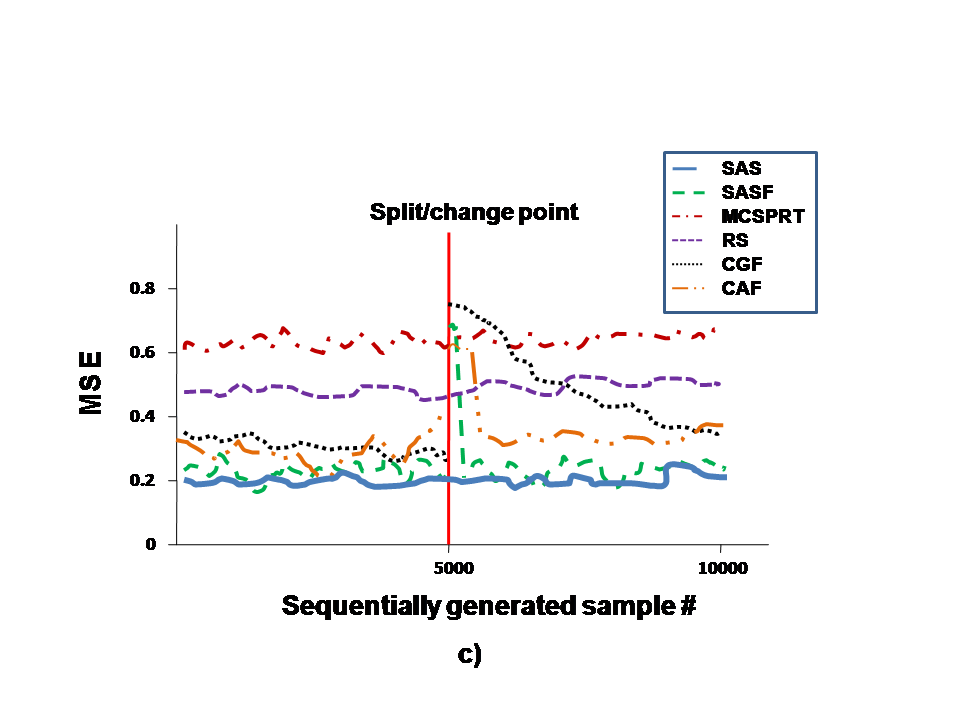}
\includegraphics[width=0.5\linewidth]{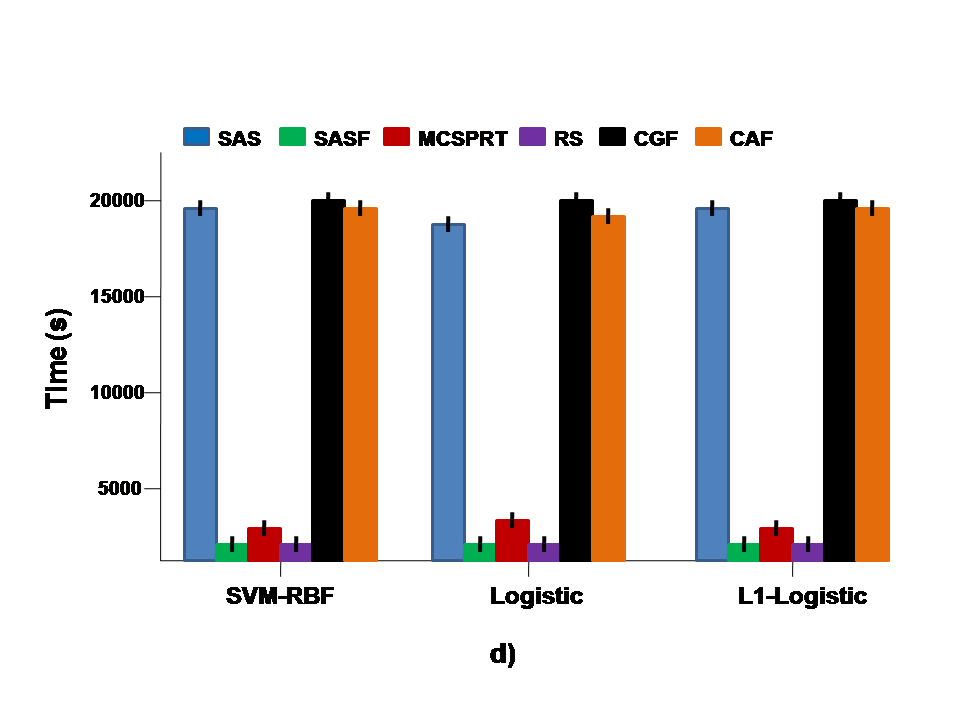}
     \end{tabular}
    \caption{Above we see the performance of the different split point detection approaches with a) SVM-RBF, b) logistic regression and c) L1 reg. logistic regression, as the base learning methods on the synthetic data. The red bold line at 5000 is where the true change has occurred. In d), we see the running time of the different methods.}
\label{syn}
\end{figure*}

\section{Efficient Version (SaSF)}
\label{sec:eff}

We have seen that in our algorithm we learn a model at every time
instant leading to $m$ iterations of model building with the new (or
newly interpreted) features. Moreover, we learn from scratch with the
old and new features at these instances. This can be computationally
intensive as we might have thousands of old features while only a few
new and we would be learning $m$ times over the entire set.

We thus suggest a couple of efficient approximations that could be
done to scale our algorithm. Firstly, based on our remarks in section \ref{sec:theory} on computational speedups, we could
consider only time instances that are $\Theta(\sqrt{m})$ apart and hence learn
only $\Theta(\sqrt{m})$ models as opposed to $m+1$ models. This would potentially
degrade the performance only by a constant factor as confirmed by the bound derived in that subsection. Secondly, we could
use the meta-algorithm suggested in \cite{jmlr2014} to efficiently
update our models with the newly added features. This would lead to a
(learning) time complexity proportional only to the newly added
features. Moreover, their algorithm is optimal for generalized linear
models and more accurate than optimizing the residual for other
models.

The above two strategies should help in making our SaS algorithm significantly
faster, while maintaining accuracy;  we thus refer to the resulting algorithm based on these two speedups\footnote{Ofcourse, the second speedup is only relevant when new features are added.} as SaSF, an acronym for SaS Fast.

\begin{figure*}[t]
   \begin{tabular}{c}
      \includegraphics[width=0.55\linewidth]{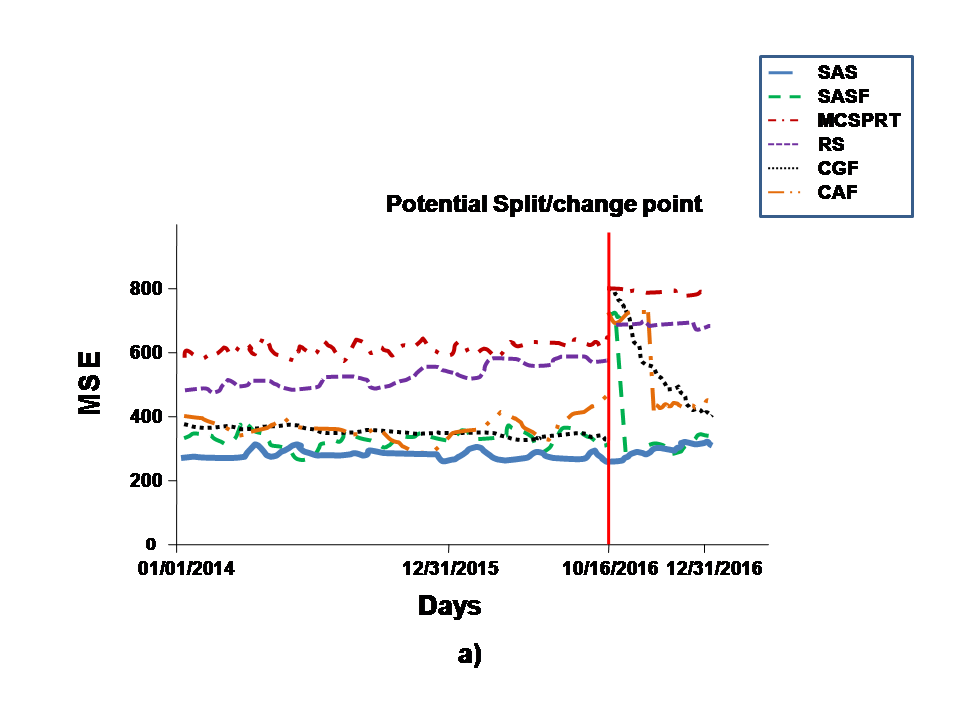}
     \includegraphics[width=0.5\linewidth]{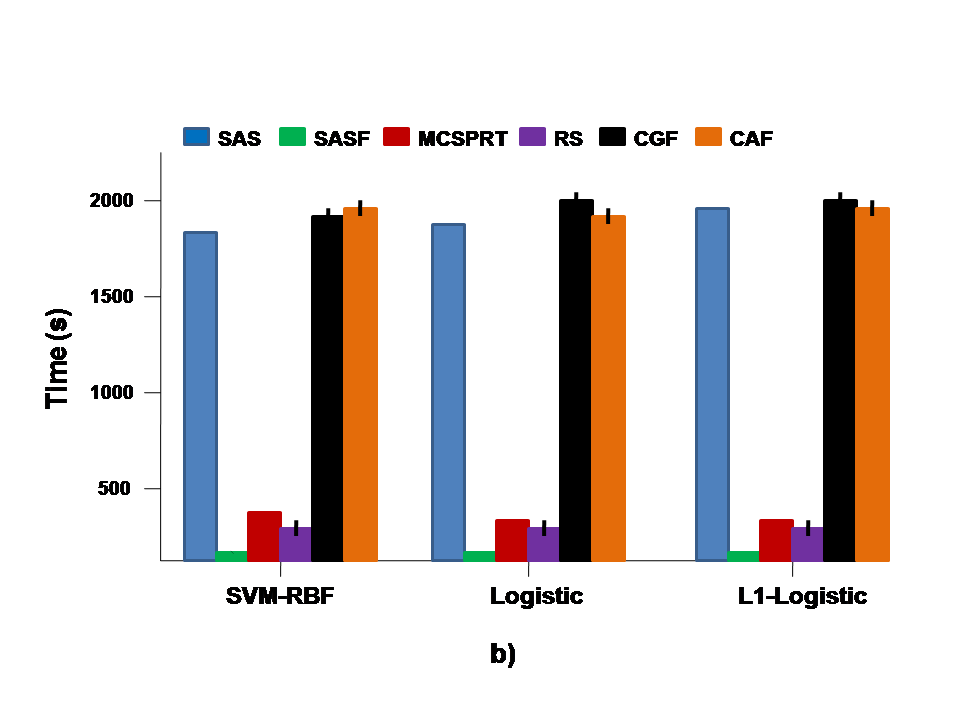}\\
\includegraphics[width=0.55\linewidth]{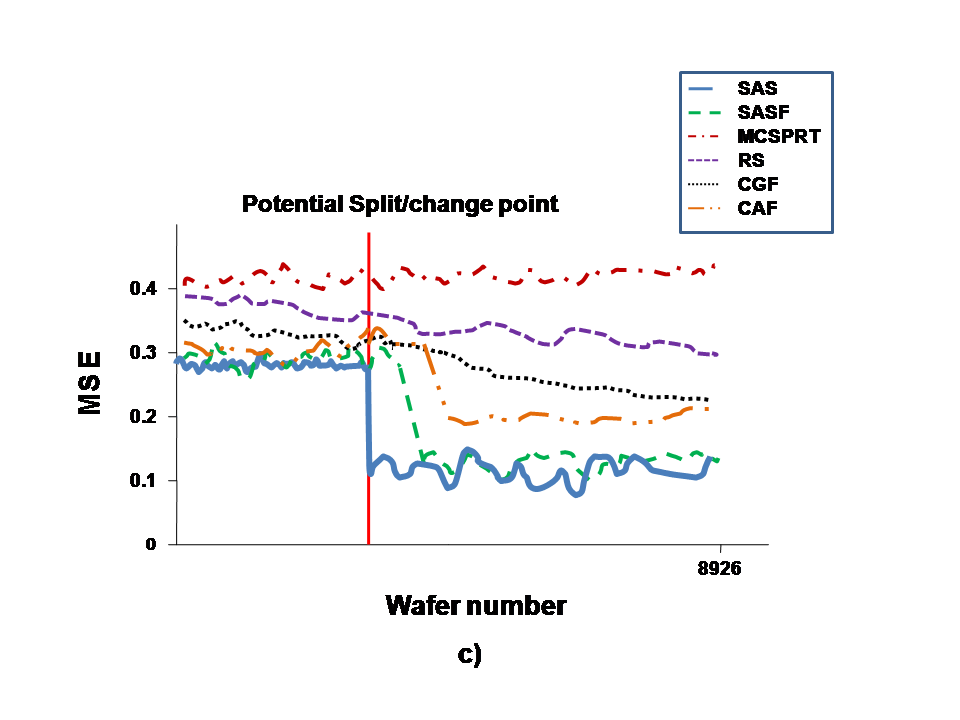}
\includegraphics[width=0.5\linewidth]{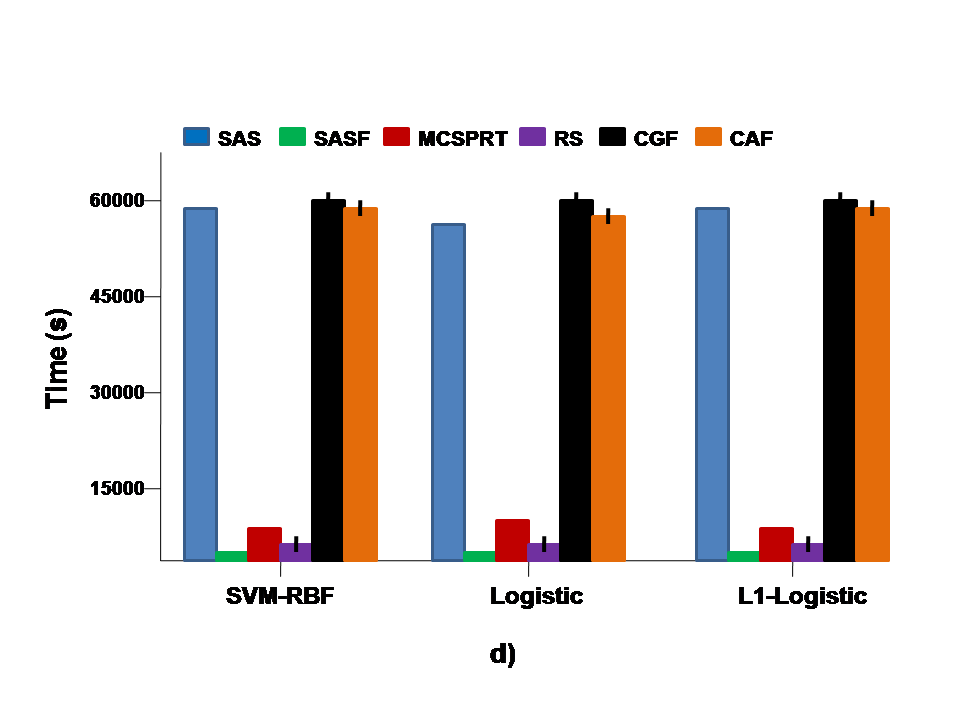}
     \end{tabular}
    \caption{Above we see the performance on the real datasets. a) and c) are the accuracy results on the retail dataset and the manufacturing dataset respectively, with SVM-RBF as the base learning method. Due to space constraints results with logistic and L1-logistic regression are in the supplementary material. b) and d) are the timing results on the retail dataset and manufacturing dataset respectively.}
\label{real}
\end{figure*}

\section{Experiments}
\label{sec:exp}

To support the theory, in this section, we evaluate our approach on synthetic as well as on two real industrial datasets. Since our setting is novel and we do not have direct competitors, we compare against methods from related frameworks described before. In particular, we compare against i) two concept drift methods one based on gradual forgetting (CGF) and the second based on abrupt forgetting (CAF)\cite{cdSurvey}, ii) a multivariate unsupervised change point detection method namely, multivariate cusum based on sequential probability ratio test (MCSPRT) and iii) a randomized strawman which randomly determines the split point (RS). Moreover, to showcase the fact that our approach's performance is not tied to any specific learning algorithm we experiment with three algorithms namely, logistic regression, L1-regularized logistic regression and SVM with an RBF kernel. 

For our approaches, RS and MCSPRT each learning algorithm is trained separately on the data before and after the identified change/split by these methods. If no split point is detected by a particular method, we train the algorithm over the entire dataset. We report the mean squared error (MSE) for each datapoint on the time ordered datasets we consider, based on the prediction of the learned hypothesis that the point corresponds to i.e. either before or after the split. In other words, points before the identified split point are predicted with the hypothesis trained on the data before the split and analogously points following the split point are predicted with hypothesis trained on the data after the split. For RS we average the results over 100 different randomly determined splits.

For the concept drift approaches the algorithms are trained based on a discounted weighting scheme for CGF and windowing for CAF as described in \cite{cdSurvey}. We experimented with 100 different $\lambda$ parameters for CGF and 100 different window sizes for CAF and chose the best.

\subsection{Synthetic Setup}

We generate data using two 1001 dimensional gaussians with mean zero -- corresponding to data before and after the split -- where the last feature is the output variable. Hence, we have 1000 input variables. We set the correlation between all the input variables to be a small value such as 0.2, as it is more realistic than having extremely high correlation or at the other end of the spectrum all of them being independent. Usually in practice many statisticians working on applied problems consider a correlation less than 0.2 to be a proxy for independence. So a value of 0.2 is indicative of weak dependence. In the first gaussian, we set the first 500 features to have very low correlation ($\rho=0.1$) with the output, while the next 500 features have high correlation ($\rho=0.7$) with the output. In the second gaussian the situation is reversed and the first 500 features have high correlation ($\rho=0.7$) with the output while the next 500 features have very low correlation ($\rho=0.1$) with the output.

We generate 100 datasets of size 10000 (i.e. $m=10000$) where the first 5000 points in each dataset are generated by the first gaussian, while the second 5000 are generated by the second gaussian. We generate the points sequentially so that there is an explicit ordering. We also normalize the outputs between zero and one so as to effectively apply logistic regression and its regularized counterpart.

\subsection{Real Data Setup}

The first dataset we consider is from a large retailer. We have 2 years of online customer data from the beginning of 2015 to the end of 2016. This is around 4TB of data containing information of roughly 80 million customers. The goal was to predict daily sales of a line/brand based on derived (aggregated) attributes from customer online visits that day such as the most common geo or zip, number of loyalty customers, average time spent on a visit, average number of pages visited per visit, average number of visits, average brand affinity across the customers that visited, average color affinity and average finish affinity for such customers. Although actual daily sales would be recorded at the end of day, having a good model can inform the business of which factors are important in identifying sales and provide confidence that they are in fact monitoring the right factors. Moreover, if sales drastically change or if factors change it is important to detect and consequently investigate the reasons. We report the results on one of the brands that was high priority for the business.

The second real dataset we consider is a real semi-conductor process of
microprocessor or chip production. In our data, a single datapoint is
a wafer, which is a group of chips, and measurements, which correspond
to input features (temperatures, pressures, etc.), are made on this wafer throughout its
production. The target that is used to evaluate the quality of the wafer,
in this case, is the (normalized) speed of the wafer, which is the median of the
speeds of its chips. We consider two critical stages of the manufacturing process where we have 2287 measurements at the first stage and another 1030 added at the second stage. The total number of wafers is 8926. The goal is to figure out if and when the second stage features start affecting wafer speed. Hence, any virtual metrology models (i.e. regression models) that have been built with the historical data have to be updated from this point onwards.

\subsection{Observations}

\noindent\textit{Synthetic Experiments:} In figures \ref{syn}a, \ref{syn}b and \ref{syn}c we see the perfomance of the different approaches on the synthetic dataset. We observe that our SaS method is by far the best at adapting to the split point as it seamlessly chooses the appropriate hypothesis for points before and after the change. SaSF is not as adept, however it too quickly switches to the right hypothesis after the change. CAF seems to be the best amongst the competitors, however it takes significantly longer to adapt with its performance being consistently slightly worse after the change. CGF slowly improves after the change point but is significantly worse than both SaS and SaSF. Random and MCSPRT do not improve with time. However random is better than MCSPRT as with random at least we have two hypothesis that are used to predict, while with MCSPRT since there is no change in the input distribution, no change point is identified and we hence learn a single regressor over the whole dataset. In figure \ref{syn}d, we see that SaSF is amongst the fastest methods and significantly faster than SaS. Hence, although SaS might be the best in performance, SaSF is definitely preferable if efficiency is a concern.

\noindent\textit{Real Data Experiments:} In figures \ref{real}a and \ref{real}b we see the performance and timing results on the retail dataset. Figure \ref{real}a shows the results with SVM-RBF, however results with the other two methods that are in the supplementary material are qualitatively similar. We observe that around $16^{th}$ of October our method SaS identifies a change point. At this juncture the other methods fail to detect the change point and keep using the same hypothesis resulting in bad predictions. SaSF although not as good as SaS adapts faster than the other competitors. Moreover, it is significantly faster than all the other methods as is seen in figure \ref{real}b. Post analysis of the change point by the domain experts resulted in the finding that around that time certain unflattering news stories were released which tarnished the brand image resulting in this change. This was an independent justification for the identified change point and a potential confirmation that it was not just noise.

In figures \ref{real}c and \ref{real}d we see the performance and timing results on the manufacturing dataset. In figure \ref{real}c, we observe that after our identified change point the performance suddenly improves. If we investigate the regressor after the change point we see that the new features start having high correlation with the target which results in this dramatic improvement. As before we see that SaS is the fastest in detecting, while SaSF although slower is still much better than the other methods. In terms of time, we observe in figure \ref{real}d, that SaSF has the biggest gain when compared with the previous experiments. The reason being that we can use both the speedups i.e. only $\sqrt{m}$ models to learn and the efficient feature updating strategy.

\section{Discussion}
Based on the theory and experiments we find that our method 
of determining the empirically optimal choice of split point 
leads to the best models in these applications.

In the future it would be interesting to experiment with multiple
change points, where the number is not a priori known. The
method based on structural risk minimization, discussed in Section~\ref{sec:theory}, should be effective for this.  This method appears to present computational
challenges when searching over many change points.  However, 
it may be possible to reduce the time complexity through dynamic
programming.

\section{Supplement: Results for Regularized Learners with Large Function Classes}
\label{sec:regularization}

In this supplement, we prove an extension of Theorem 1 from the main article,
to allow for regularization terms in the optimization defining the learning algorithm.
This extends the theory to cover methods such as $L_1$ regularized logistic regression
and Support Vector Regression.  The derived results are analogous to those presented
for empirical risk minimization over classes with finite pseudo-dimension in the main 
article, but are included here for completeness.

The setting is similar to that in Theorem 1, except that now $p_{i}$ may be infinite, and we will specify $\hat{h}_{i}$
using a penalized empirical risk minimization setup.
Specifically, we now denote by $\pen_{i}(q,t)$ a value in $[0,\infty)$, for any given $q \in [0,\infty)$.
Then, for each $i \in \{1,2\}$ and $t \in \{1,\ldots,m+1\}$, we let $\H_{i,t}^{(q)}$ be a family of sets (indexed by $q \in [0,\infty)$)
that is nondecreasing in $q$: that is, for $q < q^{\prime}$, $\H_{i,t}^{(q)} \subseteq \H_{i,t}^{(q^{\prime})}$.
Then we let 
\begin{equation*}
(\hat{h}_{1},\hat{h}_{2},\hat{t},\hat{q}_{1},\hat{q}_{2}) =\!\!\!\!\!\! \argmin_{\substack{(h_{1},h_{2},t_{0},q_{1},q_{2}) : \\ h_{1} \in \H_{1,t_{0}}^{(q_{1})}, h_{2} \in \H_{2,t_{0}}^{(q_{2})}, \\ t_{0} \in \{1,\ldots,m+1\}, q_{1},q_{2} \in [0,\infty)}} \!\!\!\!\!\! m \hat{R}(h_{1},h_{2},t_{0}) + \pen_{1}(q_{1},t_{0}) + \pen_{2}(q_{2},t_{0}).
\end{equation*}
Also denote by
\begin{equation*}
(h_{1}^{*},h_{2}^{*},t^{*},q_{1}^{*},q_{2}^{*}) =\!\!\!\!\!\! \argmin_{\substack{(h_{1},h_{2},t_{0},q_{1},q_{2}) : \\ h_{1} \in \H_{1,t_{0}}^{(q_{1})}, h_{2} \in \H_{2,t_{0}}^{(q_{2})}, \\ t_{0} \in \{1,\ldots,m+1\}, q_{1},q_{2} \in [0,\infty)}} \!\!\!\!\!\! m R^{*}(h_{1},h_{2},t_{0}) + \pen_{1}(q_{1},t_{0}) + \pen_{2}(q_{2},t_{0}).
\end{equation*}
As in the proof of Theorem 1, let $\H_{i,t,\epsilon}^{(q)}$ denote a minimal $\epsilon$-cover of $\H_{i,t}^{(q)}$ with respect to the pseudo-metric $\rho_{m}$.
Then let $N_{i}(\epsilon,q)$ be any value satisfying $N_{i}(\epsilon,q) \geq \max_{t \in \{1,\ldots,m+1\}} |\H_{i,t,\epsilon}^{(q)}|$.
Also let $\epsilon_{i}(q)$ denote any value in $(0,B]$ such that
\begin{equation*}
\epsilon_{i}(q) \leq B\sqrt{\frac{\ln(N_{i}(\epsilon_{i}(q),q))}{m}}.
\end{equation*}
For brevity, also denote $N_{i}(q) = N_{i}(\epsilon_{i}(q),q)$.

We have the following result.

\paragraph{Theorem A1.} With probability at least $1-\delta$, 
\begin{align*}
& R^{*}(\hat{h}_{1},\hat{h}_{2},\hat{t}) \leq R^{*}(h_{1}^{*},h_{2}^{*},t^{*}) + \frac{\pen_{1}(q_{1}^{*},t^{*}) + \pen_{2}(q_{2}^{*},t^{*})}{m} 
\\ & + 22 B^{2} \sqrt{\frac{2 \ln(\frac{2(m+1)}{\delta}) \!+\! \sum_{i=1}^{2} \sum_{q \in \{\hat{q}_{i},q_{i}^{*}\}} \!\ln\!\left( (q\!+\!3)^{2} N_{i}(\lceil q \rceil) \right)}{m}}
\end{align*}

\begin{proof}
We begin similarly to the proof of Theorem 1.
Let $\tilde{Y}_{1},\ldots,\tilde{Y}_{m}$ be equal in distribution to $Y_{1},\ldots,Y_{m}$
but independent of $Y_{1},\ldots,Y_{m}$, and define $\tilde{R}(h_{1},h_{2},t_{0})$ as in 
the proof of Theorem 1.
Again, for any $h_{1}$, $h_{2}$, and any $t \in \{1,\ldots,m+1\}$, 
for any $\conf_{q_{1},q_{2}}^{\prime} \in (0,1)$, 
Hoeffding's inequality implies that with probability at least $1-\delta_{q_{1},q_{2}}^{\prime}$, 
\begin{equation}
\label{eqn:tilde-concentration}
\left| \hat{R}(h_{1},h_{2},t) - \E[\tilde{R}(h_{1},h_{2},t)] \right| \leq \sqrt{\frac{2B^{2}\ln(2/\delta_{q_{1},q_{2}}^{\prime})}{m}}.
\end{equation}
For any given $q_{1}, q_{2} \in [0,\infty)$, 
by the union bound, \eqref{eqn:tilde-concentration} holds simultaneously for every choice of 
$t \in \{1,\ldots,m+1\}$, $h_{1} \in \H_{1,t,\epsilon_{1}(q_{1})}^{(q_{1})}$, and $h_{2} \in \H_{2,t,\epsilon_{2}(q_{2})}^{(q_{2})}$, 
with probability at least $1 - \delta_{q_{1},q_{2}}^{\prime} N_{1}(q_{1}) N_{2}(q_{2}) (m+1)$.
Thus, for any $\delta_{q_{1},q_{2}} \in (0,1)$, 
taking $\delta_{q_{1},q_{2}}^{\prime} = \frac{\delta_{q_{1},q_{2}}}{N_{1}(q_{1}) N_{2}(q_{2}) (m+1)}$, 
we have that with probability at least $1-\delta_{q_{1},q_{2}}$, 
$\forall t \in \{1,\ldots,m+1\}$,
$\forall h_{1} \in \H_{1,t,\epsilon_{1}(q_{1})}^{(q_{1})}$, $\forall h_{2} \in \H_{2,t,\epsilon_{2}(q_{2})}^{(q_{2})}$,
\begin{equation*}
\left| \hat{R}(h_{1},h_{2},t) - \E[\tilde{R}(h_{1},h_{2},t)] \right| \leq \sqrt{\frac{2B^{2}}{m}\ln\left(\frac{2 (m+1) N_{1}(q_{1}) N_{2}(q_{2}) }{\delta_{q_{1},q_{2}}}\right)}.
\end{equation*}
Furthermore, by the union bound, this fact holds simultaneously for all $q_{1},q_{2} \in \nats \cup \{0\}$ with probabilty at least 
$1 - \sum_{q_{1}, q_{2} \in \nats\cup\{0\}} \delta_{q_{1},q_{2}}$.  In particular, taking $\delta_{q_{1},q_{2}} = \frac{\delta}{(q_{1}+2)^{2}(q_{2}+2)^{2}}$,
we have $\sum_{q_{1},q_{2} \in \nats\cup\{0\}} \delta_{q_{1},q_{2}} < \delta$.
Thus, with probability at least $1-\delta$, 
$\forall q_{1},q_{2} \in \nats \cup \{0\}$, 
$\forall t \in \{1,\ldots,m+1\}$,
$\forall h_{1} \in \H_{1,t,\epsilon_{1}(q_{1})}^{(q_{1})}$, $\forall h_{2} \in \H_{2,t,\epsilon_{2}(q_{2})}^{(q_{2})}$,
\begin{multline*}
\left| \hat{R}(h_{1},h_{2},t) - \E[\tilde{R}(h_{1},h_{2},t)] \right| 
\\ \leq \sqrt{\frac{2B^{2}}{m}\ln\left(\frac{2 (m+1) (q_{1}+2)^{2} (q_{2}+2)^{2} N_{1}(q_{1}) N_{2}(q_{2})}{\delta}\right)}.
\end{multline*}
To extend this to allow general values of $q_{1},q_{2} \in [0,\infty)$, we simply round up to the next integer:
that is, on the above event of probability at least $1-\delta$, 
for any $q_{1},q_{2} \in [0,\infty)$, 
$\forall t \in \{1,\ldots,m+1\}$,
$\forall h_{1} \in \H_{1,t,\epsilon_{1}(\lceil q_{1} \rceil)}^{(\lceil q_{1} \rceil)}$, $h_{2} \in \H_{2,t,\epsilon_{2}(\lceil q_{2} \rceil)}^{(\lceil q_{2} \rceil)}$,
\begin{multline*}
\left| \hat{R}(h_{1},h_{2},t) - \E[\tilde{R}(h_{1},h_{2},t)] \right| 
\\ \leq \sqrt{\frac{2B^{2}}{m}\ln\left(\frac{2 (m+1) (q_{1}+3)^{2} (q_{2}+3)^{2} N_{1}(\lceil q_{1} \rceil) N_{2}(\lceil q_{2} \rceil)}{\delta}\right)}.
\end{multline*}

Now, for any $q_{1},q_{2} \in [0,\infty)$, define 
$h_{i,q}^{*} = \argmin_{h \in \H_{i,t^{*},\epsilon_{i}( \lceil q \rceil)}^{(\lceil q \rceil)}} \rho_{m}(h,h_{i}^{*})$
and $\hat{h}_{i,q} = \argmin_{h \in \H_{i,\hat{t},\epsilon_{i}( \lceil q \rceil)}^{(\lceil q \rceil)}} \rho_{m}(h,\hat{h}_{i})$, for $i \in \{1,2\}$.
Denote $\hat{y}_{t} = \hat{h}_{1}(x_{t})$ and $\hat{y}_{t,q_{1},q_{2}} = \hat{h}_{1,q_{1}}(x_{t})$ for each $t \leq \hat{t}-1$,
and $\hat{y}_{t} = \hat{h}_{2}(x_{t})$ and $\hat{y}_{t,q_{1},q_{2}} = \hat{h}_{2,q_{2}}(x_{t})$ for each $t \geq \hat{t}$.
Also let $\hat{\epsilon} = \max_{i \in \{1,2\}} \epsilon_{i}( \lceil \hat{q}_{i} \rceil )$.
Similarly, denote 
$y_{t}^{*} = h_{1}^{*}(x_{t})$ and $y_{t,q_{1},q_{2}}^{*} = h_{1,q_{1}}^{*}(x_{t})$ for each $t \leq t^{*}-1$,
and $y_{t}^{*} = h_{2}^{*}(x_{t})$ and $y_{t,q_{1},q_{2}}^{*} = h_{2,q_{2}}^{*}(x_{t})$ for each $t \geq t^{*}$,
and let $\epsilon^{*} = \max_{i \in \{1,2\}} \epsilon_{i}( \lceil q_{i}^{*} \rceil )$.
Then by straightforward calculations, we have
\begin{align*}
& R^{*}(\hat{h}_{1},\hat{h}_{2},\hat{t}) - R^{*}(h_{1}^{*},h_{2}^{*},t^{*})
= \E\left[ \tilde{R}(\hat{h}_{1},\hat{h}_{2},\hat{t}) | \hat{h}_{1} \hat{h}_{2}, \hat{t}, \hat{q}_{1}, \hat{q}_{2} \right] - \E\left[ \tilde{R}(h_{1}^{*},h_{2}^{*},t^{*}) \right]
\\ & \leq \frac{1}{m}\sum_{t=1}^{m} \E\left[ (\hat{y}_{t,\hat{q}_{1},\hat{q}_{2}} - \tilde{Y}_{t})^{2} + \hat{\epsilon}^{2} + 2 \hat{\epsilon} | \hat{y}_{t,\hat{q}_{1},\hat{q}_{2}} - \tilde{Y}_{t} | \middle| \hat{h}_{1},\hat{h}_{2},\hat{t},\hat{q}_{1},\hat{q}_{2} \right]
\\ & {\hskip 6mm}- \frac{1}{m} \sum_{t=1}^{m} \E\left[ (y_{t,q_{1}^{*},q_{2}^{*}}^{*} - \tilde{Y}_{t})^{2} - (\epsilon^{*})^{2} - 2 \epsilon^{*} | y_{t,q_{1}^{*},q_{2}^{*}} - \tilde{Y}_{t} | \right]
\\ & \leq \hat{\epsilon}^{2} + (\epsilon^{*})^{2} + 4 ( \hat{\epsilon} + \epsilon^{*} ) B 
\\ & {\hskip 6mm}+ \E\left[ \tilde{R}(\hat{h}_{1,\hat{q}_{1}},\hat{h}_{2,\hat{q}_{2}},\hat{t}) \middle| \hat{h}_{1}, \hat{h}_{2}, \hat{t}, \hat{q}_{1}, \hat{q}_{2} \right] - \E\left[ \tilde{R}(h_{1,q_{1}^{*}}^{*}, h_{2,q_{2}^{*}}^{*}, t^{*}) \right].
\end{align*}
Thus, on the above event of probability $1-\delta$, 
\begin{align*}
& R^{*}(\hat{h}_{1},\hat{h}_{2},\hat{t}) - R^{*}(h_{1}^{*},h_{2}^{*},t^{*})
\leq \hat{\epsilon}^{2} + (\epsilon^{*})^{2} + 4 ( \hat{\epsilon} + \epsilon^{*} ) B 
\\ & + \sqrt{\frac{2B^{2}}{m}\ln\left(\frac{2 (m+1) (\hat{q}_{1}+3)^{2} (\hat{q}_{2}+3)^{2} N_{1}(\lceil \hat{q}_{1} \rceil) N_{2}(\lceil \hat{q}_{2} \rceil)}{\delta}\right)}
\\ & + \sqrt{\frac{2B^{2}}{m}\ln\left(\frac{2 (m+1) (q_{1}^{*}+3)^{2} (q_{2}^{*}+3)^{2} N_{1}(\lceil q_{1}^{*} \rceil) N_{2}(\lceil q_{2}^{*} \rceil)}{\delta}\right)}
\\ & + \hat{R}(\hat{h}_{1,\hat{q}_{1}},\hat{h}_{2,\hat{q}_{2}},\hat{t}) - \hat{R}(h_{1,q_{1}^{*}}^{*},h_{2,q_{2}^{*}}^{*},t^{*}).
\end{align*}
Then note that
\begin{align*}
& \hat{R}(\hat{h}_{1,\hat{q}_{1}},\hat{h}_{2,\hat{q}_{2}},\hat{t}) - \hat{R}(h_{1,q_{1}^{*}}^{*},h_{2,q_{2}^{*}}^{*},t^{*})
\\ & \leq \frac{1}{m} \sum_{t=1}^{m} \left( (\hat{y}_{t} - Y_{t})^{2} + \hat{\epsilon}^{2} + 2 \hat{\epsilon} |\hat{y}_{t} - Y_{t}| \right)
\\ & {\hskip 12mm}- \frac{1}{m} \sum_{t=1}^{m} \left( (y_{t}^{*} - Y_{t})^{2} - (\epsilon^{*})^{2} - 2 \epsilon^{*} |y_{t}^{*} - Y_{t}| \right)
\\ & \leq \hat{\epsilon}^{2} + (\epsilon^{*})^{2} + 4 B ( \hat{\epsilon} + \epsilon^{*} ) + \hat{R}(\hat{h}_{1},\hat{h}_{2},\hat{t}) - \hat{R}(h_{1}^{*}, h_{2}^{*}, t^{*})
\\ & \leq \hat{\epsilon}^{2} + (\epsilon^{*})^{2} + 4 B ( \hat{\epsilon} + \epsilon^{*} ) + \hat{R}(\hat{h}_{1},\hat{h}_{2},\hat{t}) 
\\ & {\hskip 22mm}+ \frac{\pen_{1}(\hat{q}_{1},\hat{t}) + \pen_{2}(\hat{q}_{2},\hat{t})}{m} - \hat{R}(h_{1}^{*}, h_{2}^{*}, t^{*})
\\ & \leq \hat{\epsilon}^{2} + (\epsilon^{*})^{2} + 4 B ( \hat{\epsilon} + \epsilon^{*} ) + \hat{R}(h_{1}^{*},h_{2}^{*},t^{*}) 
\\ & {\hskip 22mm}+ \frac{\pen_{1}(q_{1}^{*},t^{*}) + \pen_{2}(q_{2}^{*},t^{*})}{m} - \hat{R}(h_{1}^{*}, h_{2}^{*}, t^{*})
\\ & = \hat{\epsilon}^{2} + (\epsilon^{*})^{2} + 4 B ( \hat{\epsilon} + \epsilon^{*} ) + \frac{\pen_{1}(q_{1}^{*},t^{*}) + \pen_{2}(q_{2}^{*},t^{*})}{m}.
\end{align*}
Altogether, we have that with probability at least $1-\delta$, 
\begin{align*}
& R^{*}(\hat{h}_{1},\hat{h}_{2},\hat{t}) - R^{*}(h_{1}^{*},h_{2}^{*},t^{*})
\\ & \leq 2\hat{\epsilon}^{2} + 2(\epsilon^{*})^{2} + 8 ( \hat{\epsilon} + \epsilon^{*} ) B + \frac{\pen_{1}(q_{1}^{*},t^{*}) + \pen_{2}(q_{2}^{*},t^{*})}{m}
\\ & {\hskip 6mm}+ \sqrt{\frac{2B^{2}}{m}\ln\left(\frac{2 (m+1) (\hat{q}_{1}+3)^{2} (\hat{q}_{2}+3)^{2} N_{1}(\lceil \hat{q}_{1} \rceil) N_{2}(\lceil \hat{q}_{2} \rceil)}{\delta}\right)}
\\ & {\hskip 6mm}+ \sqrt{\frac{2B^{2}}{m}\ln\left(\frac{2 (m+1) (q_{1}^{*}+3)^{2} (q_{2}^{*}+3)^{2} N_{1}(\lceil q_{1}^{*} \rceil) N_{2}(\lceil q_{2}^{*} \rceil)}{\delta}\right)}
\\ & \leq 11 B^{2}\sqrt{\frac{2}{m}\ln\left(\frac{2 (m+1) (\hat{q}_{1}+3)^{2} (\hat{q}_{2}+3)^{2} N_{1}(\lceil \hat{q}_{1} \rceil) N_{2}(\lceil \hat{q}_{2} \rceil) }{\delta}\right)}
\\ & {\hskip 6mm}+ 11 B^{2} \sqrt{\frac{2}{m}\ln\left(\frac{2 (m+1) (q_{1}^{*}+3)^{2} (q_{2}^{*}+3)^{2} N_{1}(\lceil q_{1}^{*} \rceil) N_{2}(\lceil q_{2}^{*} \rceil)}{\delta}\right)}
\\ & {\hskip 6mm}+ \frac{\pen_{1}(q_{1}^{*},t^{*}) + \pen_{2}(q_{2}^{*},t^{*})}{m}
\\ & \leq 22 B^{2}\sqrt{\frac{2}{m}\ln\left(\frac{2(m+1)}{\delta} \prod_{i \in \{1,2\}} \prod_{q \in \{\hat{q}_{i},q_{i}^{*}\}} (q+3)^{2} N_{i}(\lceil q \rceil)\right)}
\\ & {\hskip 6mm}+ \frac{\pen_{1}(q_{1}^{*},t^{*}) + \pen_{2}(q_{2}^{*},t^{*})}{m}.
\end{align*}
\end{proof}

As an example application of Theorem A1, consider the case of regularized kernel regression, with kernel functions $K_{1}$, $K_{2}$ (e.g., radial basis functions),
along with feature functions $\phi_{1},\ldots,\phi_{d+k}$ as in Section 3 of the main article.  For brevity, denote by $\phi_{1:d}(x) = (\phi_{1}(x),\ldots,\phi_{d}(x))$ and $\phi_{1:(d+k)}(x) = (\phi_{1}(x),\ldots,\phi_{d+k}(x))$.
In this case, fix any $\lambda_{1},\lambda_{2} > 0$, and consider choosing 
\begin{equation*}
\hat{h}_{1}(x) = \sum_{t=1}^{\hat{t}-1} \hat{\alpha}_{t} K_{1}( \phi_{1:d}(x_{t}), \phi_{1:d}(x) )
\end{equation*}
and 
\begin{equation*}
\hat{h}_{2}(x) = \sum_{t=\hat{t}}^{m} \hat{\alpha}_{t} K_{2}( \phi_{1:(d+k)}(x_{t}), \phi_{1:(d+k)}(x) ),
\end{equation*}
with parameters $\hat{\alpha}_{1},\ldots,\hat{\alpha}_{m}$
chosen so that $\hat{\alpha}_{1},\ldots,\hat{\alpha}_{\hat{t}-1}$ minimizes the expression
\begin{equation*}
\sum_{t=1}^{\hat{t}-1} \left( Y_{t} - \sum_{i=1}^{\hat{t}-1} \hat{\alpha}_{i} K_{1}(\phi_{1:d}(x_{i}),\phi_{1:d}(x_{t})) \right)^{2} \!\!+ \lambda_{1} \sum_{i=1}^{\hat{t}-1} \sum_{j=1}^{\hat{t}-1} \hat{\alpha}_{i} \hat{\alpha}_{j} K_{1}(\phi_{1:d}(x_{i}),\phi_{1:d}(x_{j})),
\end{equation*}
and $\hat{\alpha}_{\hat{t}},\ldots,\hat{\alpha}_{m}$ minimizes the expression
\begin{multline*}
\sum_{t=\hat{t}}^{m} \left( Y_{t} - \sum_{i=\hat{t}}^{m} \hat{\alpha}_{i} K_{2}(\phi_{1:(d+k)}(x_{i}),\phi_{1:(d+k)}(x_{t})) \right)^{2} 
\\ + \lambda_{2} \sum_{i=\hat{t}}^{m} \sum_{j=\hat{t}}^{m} \hat{\alpha}_{i} \hat{\alpha}_{j} K_{2}(\phi_{1:(d+k)}(x_{i}),\phi_{1:(d+k)}(x_{j})),
\end{multline*}
and where $\hat{t}$ is chosen to minimize the sum of these two expressions obtained at these minimizing $\hat{\alpha}_{1},\ldots,\hat{\alpha}_{m}$ values.
This corresponds to choosing $(\hat{h}_{1},\hat{h}_{2},\hat{t})$ as above, with 
$\H_{1,t}^{(q)}$ defined as the set of functions 
$x \mapsto \sum_{i=1}^{t-1} \alpha_{i} K_{1}(\phi_{1:d}(x_{i}),\phi_{1:d}(x))$ with $\sum_{i=1}^{t-1} \sum_{j=1}^{t-1} \alpha_{i} \alpha_{j} K_{1}(\phi_{1:d}(x_{i}),\phi_{1:d}(x_{j})) \leq q$,
and similarly with $\H_{2,t}^{(q)}$ defined as the set of functions $x \mapsto \sum_{i=t}^{m} \alpha_{i} K_{2}(\phi_{1:(d+k)}(x_{i}),\phi_{1:(d+k)}(x))$ with $\sum_{i=t}^{m} \sum_{j=t}^{m} \alpha_{i} \alpha_{j} K_{2}(\phi_{1:(d+k)}(x_{i}),\phi_{1:(d+k)}(x_{j})) \leq q$,
and with $\pen_{1}(q,t) = \lambda_{1} q$ and $\pen_{2}(q,t) = \lambda_{2} q$.

In this example, the covering numbers $|\H_{i,t,\epsilon}^{(q)}|$ will generally depend on the specific kernel function $K_{i}$,
and have been the subject of much study.  As one concrete example, consider unit-bandwidth Gaussian kernels:
$K_{1}(u,v) = \exp\{ - \|u - v\|^{2} \}$, for $u,v \in \reals^{d}$, and $K_{2}(u,v) = \exp\{ - \|u-v\|^{2} \}$, for $u,v \in \reals^{d+k}$.
In this case, denoting $n_{i} = d + k \ind[i=2]$,
\cite{zhou:02} argues that for $0 < \epsilon \leq q \exp\{ - 90 n_{i}^{2} - 11 n_{i} - 3 \}$, we have 
\begin{equation*}
\ln(|\H_{i,t,\epsilon}^{(q)}|) \leq 4^{n_{i}} (6 n_{i} + 2)( \ln(q/\epsilon) )^{n_{i}+1},
\end{equation*}
for $i \in \{1,2\}$.
Since the right hand side does not depend on $t$, let us take $N_{i}(\epsilon,q)$ equal to this value (for $\epsilon$ in the range specified above).
Thus, for instance, we may take 
\begin{equation*}
\epsilon_{i}(q) = \min\left\{ q \exp\{ - 90 n_{i}^{2} - 11 n_{i} - 3 \}, \sqrt{\frac{1}{m}} \right\}
\end{equation*}
to satisfy the constraint
\begin{equation*}
\epsilon_{i}(q) \leq \min\left\{ q \exp\{ - 90 n_{i}^{2} - 11 n_{i} - 3 \},  B 2^{n_{i}} ( \ln(q/\epsilon_{i}(q)) )^{\frac{n_{i}+1}{2}} \sqrt{ \frac{6 n_{i} + 2}{m} } \right\}.
\end{equation*}
Now, denoting by $\alpha_{1}^{*},\ldots,\alpha_{m}^{*}$ the values such that $h_{1}^{*}(x) = \sum_{i=1}^{t^{*}-1} \alpha_{i}^{*} K_{1}(x_{i},x)$ and $h_{2}^{*}(x) = \sum_{i=t^{*}}^{m} \alpha_{i}^{*} K_{2}(x_{i},x)$,
and denoting 
\begin{align*}
A_{1}^{*} & = \sum_{i=1}^{t^{*}-1} \sum_{j=1}^{t^{*}-1} \alpha_{i}^{*} \alpha_{j}^{*} K_{1}(\phi_{1:d}(x_{i}),\phi_{1:d}(x_{j})), 
\\ A_{2}^{*} & = \sum_{i=t^{*}}^{m} \sum_{j=t^{*}}^{m} \alpha_{i}^{*} \alpha_{j}^{*} K_{2}(\phi_{1:(d+k)}(x_{i}),\phi_{1:(d+k)}(x_{j})),
\\ \hat{A}_{1} & = \sum_{i=1}^{\hat{t}-1} \sum_{j=1}^{\hat{t}-1} \hat{\alpha}_{i} \hat{\alpha}_{j} K_{1}(\phi_{1:d}(x_{i}),\phi_{1:d}(x_{j})),
\\ \text{and } \hat{A}_{2} & = \sum_{i=\hat{t}}^{m} \sum_{j=\hat{t}}^{m} \hat{\alpha}_{i} \hat{\alpha}_{j} K_{2}(\phi_{1:(d+k)}(x_{i}),\phi_{1:(d+k)}(x_{j})),
\end{align*}
the above theorem guarantees that, for some finite numerical constants $c_{1},c_{2}$, with probability at least $1-\delta$, 
\begin{multline*}
R^{*}(\hat{h}_{1},\hat{h}_{2},\hat{t}) \leq R^{*}(h_{1}^{*},h_{2}^{*},t^{*}) + \frac{1}{m} \left( \lambda_{1} A_{1}^{*} + \lambda_{2} A_{2}^{*} \right)
\\ + \frac{c_{1} B^{2}}{\sqrt{m}} \bigg( 
\sqrt{\ln(1/\delta)} + (d+k)^{c_{2} (d+k)} + (\ln( \max\{A_{1}^{*}, \hat{A}_{1}\} \sqrt{m} ))^{c_{2} d} 
\\+ (\ln( \max\{A_{2}^{*},\hat{A}_{2}\} \sqrt{m} ))^{c_{2}(d+k)} \bigg),
\end{multline*}
supposing $d \geq 2$ to simplify the expression.
This expression is exponential in the dimension (reflecting the well-known ``curse of dimensionality'' 
commonly observed in nonparametric regression), but may be decreasing in $m$,
at a rate determined by the $A_{i}^{*}$ and $\hat{A}_{i}$ values
(which themselves can be analyzed in terms of characterizations of the smoothness of the regression function).

\eat{************** Old stuff
\begin{figure}[t]
  \begin{center}
      \includegraphics[width=0.6\linewidth]{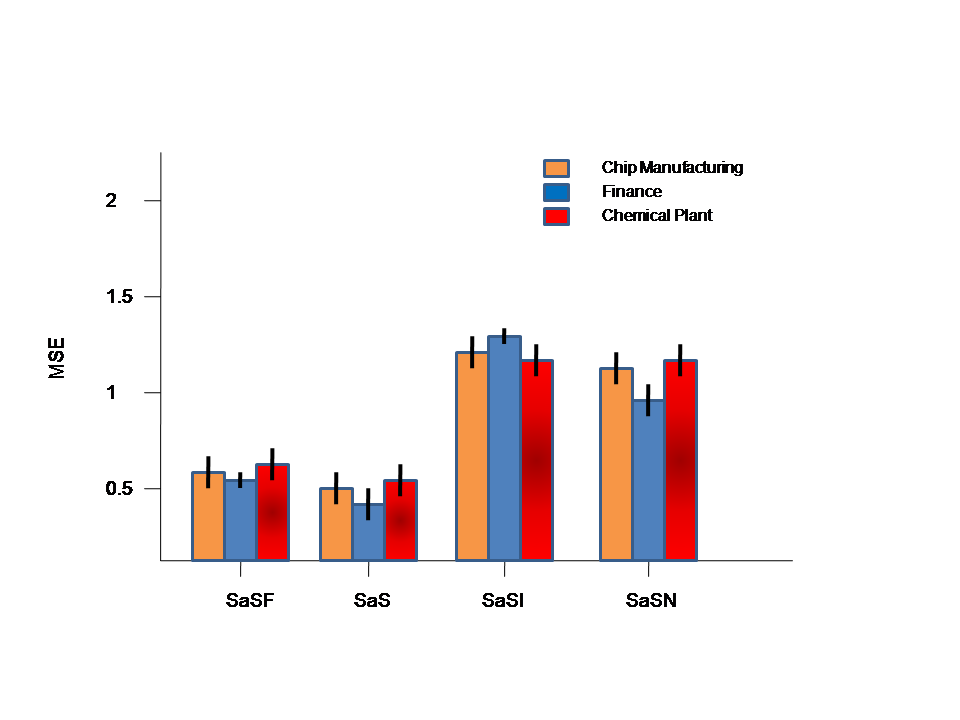}
  \end{center}
  \caption{Above we see the performance of our method compared with
    just using the new or initial features.}
  \label{perf2}
\end{figure}
*****************}

\eat{
\begin{figure*}[t]
   \begin{tabular}{c}
      \includegraphics[width=0.65\linewidth]{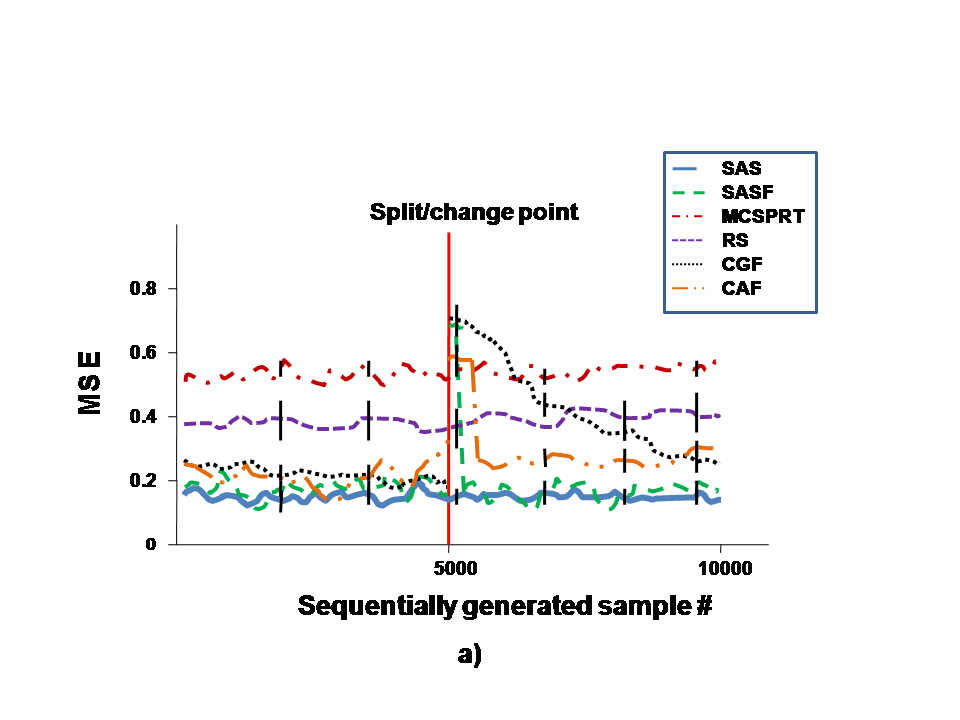}
     \includegraphics[width=0.65\linewidth]{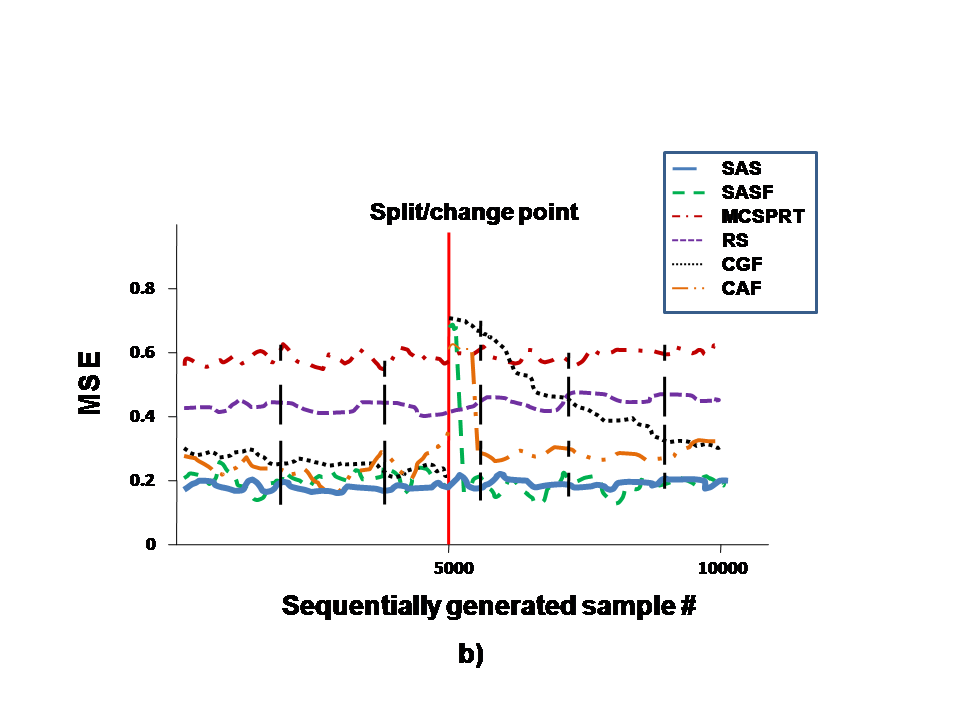}\\
\includegraphics[width=0.65\linewidth]{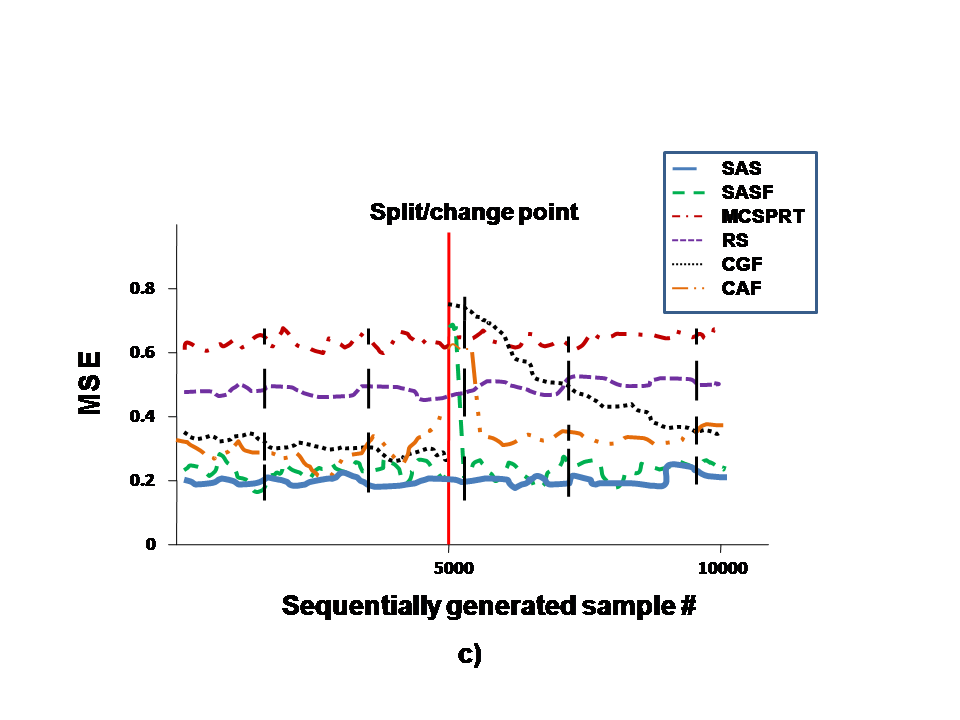}
     \end{tabular}
    \caption{Above we see the performance of the different split point detection approaches with 95\% confidence intervals and with a) SVM-RBF, b) logistic regression and c) L1 reg. logistic regression, as the base learning methods on the synthetic data. The red bold line at 5000 is where the true change has occurred.}
\label{syn}
\end{figure*}

\begin{figure*}[t]
   \begin{tabular}{c}
      \includegraphics[width=0.65\linewidth]{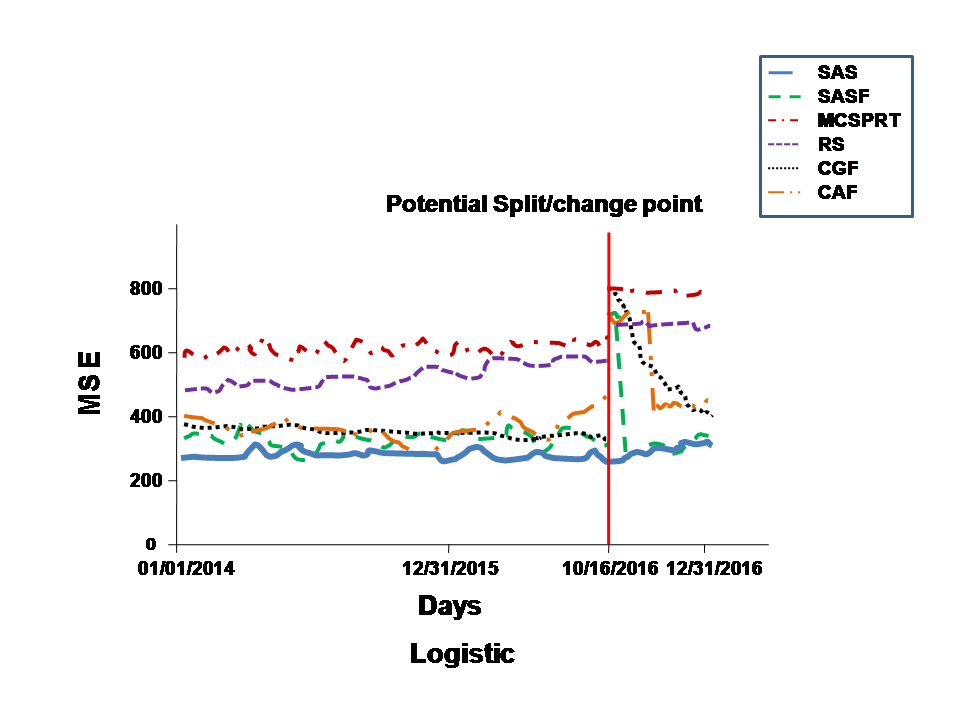}
     \includegraphics[width=0.65\linewidth]{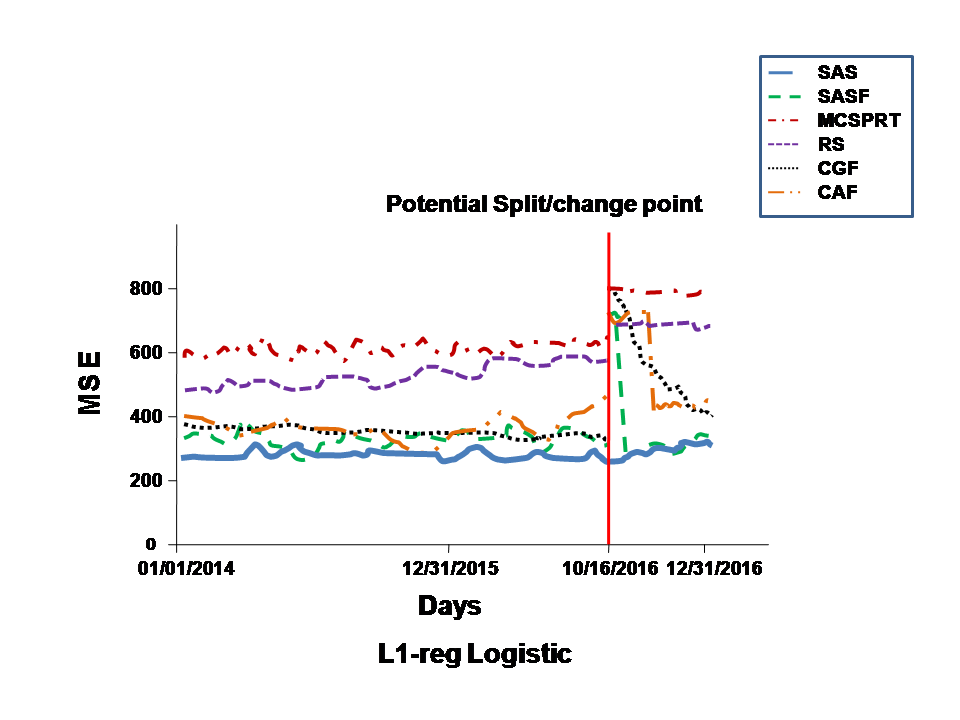}\\
\includegraphics[width=0.65\linewidth]{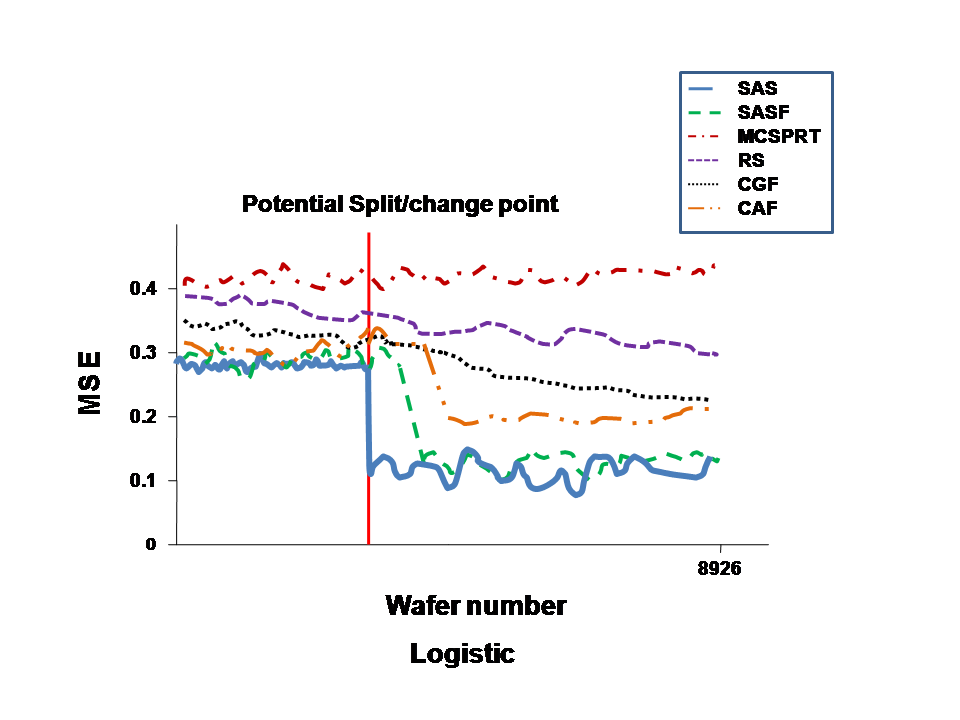}
\includegraphics[width=0.65\linewidth]{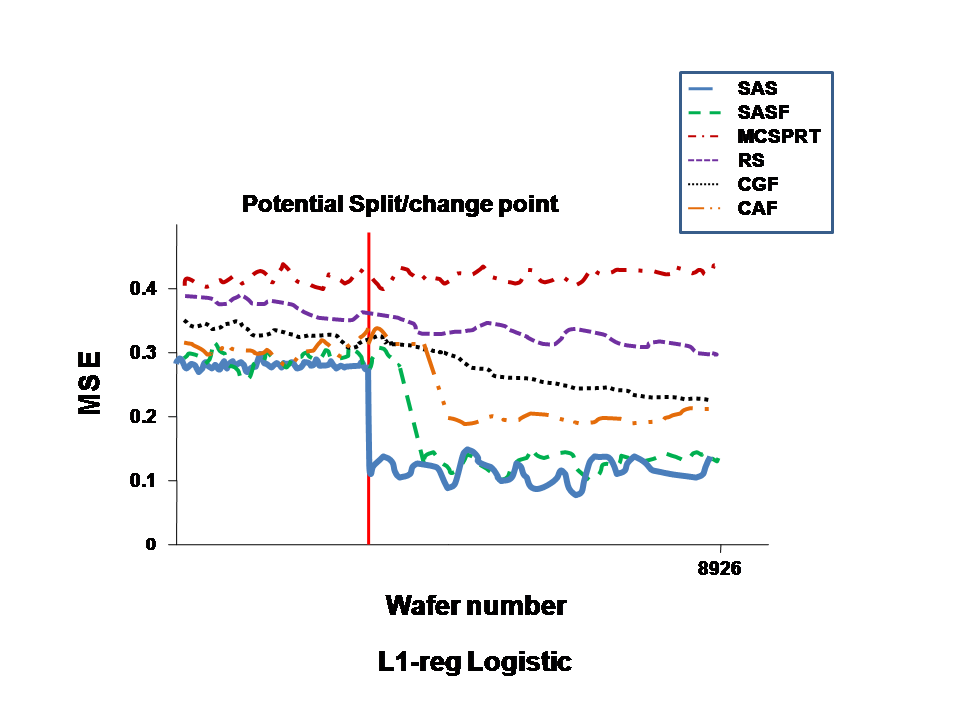}
     \end{tabular}
    \caption{Above we see the performance of logistic regression and L1-regularized logistic regression on the retail and manufacturing datasets respectively.}
\label{real}
\end{figure*}

\section{(Additional) Experiments}

In figure \ref{syn}, we observe the performance of the different split detection approaches on the synthetic data with 95\% confidence intervals. It is clear from these experiments that the performance of our methods is statistically significantly better than the competitors.

In figure \ref{real}, we observe the performance of logistic and L1-regularized logistic regression on the retail and manufacturing datasets respectively. The results are qualitatively similar to SVM-RBF in the main article, where our methods are significantly better in identifying the change point.

}
\eat{*************** Old stuff
One additional question is whether these performance benefits are
truly due to using the combination of all features in the second
part, and initial features in the first part of the data, or rather merely due to
the fact that we search over splits of the data.  To explore this question,
we also compare against our method which searches
for the best choice of split point, 
but uses only the new (SaSN) or only the
initial (SaSI) features available, in \emph{both} 
halves of the data after splitting into parts.

In figure \ref{perf2}\footnote{Here logistic regression is the base method as it had the best performance compared with its regularized version and SVM-RBF.}, we see that both SaS and SaSF are
significantly better than SaSN and SaSI, which implies that using
both the initial and new features are essential in getting the most
accurate model. Although the new features seem to have more predictive
ability than the initial features, the best model uses all of them,
so that neither feature set is redundant in these applications.
***********************}

\bibliographystyle{plain}
\bibliography{feature-drift} 

\end{document}